%% file: acl_latex.tex
\documentclass[11pt]{article}
\usepackage{multirow}
\usepackage{placeins}
\usepackage[preprint]{acl}
\usepackage{amsmath}
\usepackage{amssymb}
\usepackage{algorithm}
\usepackage{algpseudocode}
\usepackage{times}
\usepackage{latexsym}
\usepackage{booktabs}
\usepackage[T1]{fontenc}
\usepackage{fontawesome5}

\usepackage{enumitem}
\usepackage{amsthm}
\newtheorem{assumption}{Assumption}
\newtheorem{proposition}{Proposition}
\newtheorem{corollary}{Corollary}
\newtheorem{definition}{Definition}
\newtheorem{remark}{Remark}
\newtheorem*{proposition*}{Proposition}\newtheorem*{assumption*}{Assumption}

\usepackage[utf8]{inputenc}

\usepackage{microtype}

\usepackage{inconsolata}

\usepackage{graphicx}

\title{AMBER: Multi-View Adaptive Budget Allocation for Listwise Vision-Language Reranking}

\author{
  \textbf{Wenteng Chen\textsuperscript{1,*}},
  \textbf{Jiachen Zhu\textsuperscript{1,*}},
  \textbf{Rong Shan\textsuperscript{1,3}},
  \textbf{Tianyi Xu\textsuperscript{1}},
  \textbf{Yuxiang Chen\textsuperscript{4}},
  \\
  \textbf{Congmin Zheng\textsuperscript{1}},
  \textbf{Teng Wang\textsuperscript{2}},
  \textbf{Junjie Wu\textsuperscript{2}},
  \textbf{Weiwen Liu\textsuperscript{1}},
  \textbf{Changwang Zhang\textsuperscript{2}},
  \\
  \textbf{Weinan Zhang\textsuperscript{1}},
  \textbf{Jun Wang\textsuperscript{2,\textdagger}},
  \textbf{Jianghao Lin\textsuperscript{1,\textdagger}}
  \\
  \\
  \textsuperscript{1}Shanghai Jiao Tong University,
  \textsuperscript{2}OPPO,
  \textsuperscript{3}Shanghai Innovation Institute,
  \textsuperscript{4}University College London
  \\
  \small{
    \textsuperscript{*}Equal contribution.
    \quad
    \textsuperscript{\textdagger}Corresponding authors.
  }
  \\
  \small{
    \texttt{\{cwt-03,gebro13,linjianghao\}@sjtu.edu.cn}
  }
}

\begin{document}
\maketitle
\begin{abstract}

Vision-language models (VLMs) are powerful listwise rerankers for multimodal retrieval, but high inference costs restrict them to evaluating small local candidate views. Existing multi-call strategies rely on fixed schedules, wasting expensive VLM calls on uninformative candidate pairs and easy queries. To address this, we propose \textbf{A}daptive \textbf{M}ulti-view \textbf{B}udgeted \textbf{E}lo \textbf{R}eranking (\textbf{AMBER}), an online, budgeted multi-view reranking framework that dynamically optimizes global resource allocation. AMBER treats fragmented listwise VLM outputs as local tournaments, using continuous Elo updates to maintain a lightweight global ranking state. Building on this, it allocates computation at two levels: dynamically constructing candidate views with high score ambiguity, and scheduling queries to maximize expected information gain. We show that each Elo update corresponds to a stochastic gradient ascent step on the Bradley--Terry log-likelihood, and provide a submodular information-theoretic motivation for the query-level allocation strategy. Experiments on CIRR, CIRCO, and PhotoBench demonstrate that AMBER achieves the strongest overall performance among the compared multi-call VLM reranking methods under comparable VLM-call budgets, while remaining effective in lower-budget settings. Our code is publicly available at
\href{https://github.com/wnlfc/AMBER}{\faGithub\ \texttt{AMBER}}.
\end{abstract}

\input{text/introduction}

\input{text/preliminary}

\input{text/methodology}

\input{text/experiments}

\input{text/related_works}

\input{text/conclusion}

\input{text/limitation}

\bibliography{custom}

\appendix

\input{text/Appendix}

\end{document}

%% file: text/introduction.tex
\section{Introduction}

Multimodal retrieval typically follows a two-stage paradigm, where a strong reranker refines candidates recalled by an embedding model. Vision-language models (VLMs) excel at this second stage via listwise reranking, which compares multiple images in a shared context to capture fine-grained visual-textual alignment \citep{cai2026visionmeetstextslistwise, qiao2026llmbasedlistwisererankingeffect}. However, this approach introduces a severe resource bottleneck. Due to token costs, context length limits, and inference latency, a single VLM call can only observe a small \emph{local view} of candidates. Reliable reranking thus requires aggregating evidence across multiple VLM calls, which creates a critical \textbf{budget allocation problem}: how should a system optimally distribute a limited computation budget across thousands of queries and candidates?

Existing multi-call strategies fail to utilize this budget effectively. Approaches like sliding-window or tournament-style traversals rely on fixed schedules \citep{chen2025tourrankutilizinglargelanguage}. They apply the same reranking effort uniformly, failing to distinguish which candidate comparisons are actually informative or which queries are fundamentally more difficult. While recent adaptive methods \citep{yoon2025acurankuncertaintyawareadaptivecomputation} can stop early for confident queries, they operate in isolation and cannot dynamically redistribute the remaining budget across a global pool of competing queries. Consequently, existing systems waste expensive VLM calls on easy queries or uninformative candidate pairs \citep{rashid2024ecorankbudgetconstrainedtextreranking, liu2026stopoverthinkingunlockingefficient}.

To address this,  we propose \textbf{A}daptive \textbf{M}ulti-view \textbf{B}udgeted \textbf{E}lo \textbf{R}eranking (\textbf{AMBER}), an online, budgeted multi-view reranking framework that jointly optimizes evidence aggregation and dynamic resource allocation. To aggregate fragmented local views, AMBER treats each listwise VLM output as a local tournament, converting the outcomes into online Elo score updates. This yields a lightweight global ranking state that seamlessly integrates first-stage embedding priors with VLM evidence, without requiring an explicit pairwise preference matrix.

Building on this dynamic state, AMBER strategically allocates computation at two levels. At the \emph{candidate level}, it constructs the next local view by prioritizing candidates that are near the top of the ranking, infrequently observed, and have ambiguous scores compared to their neighbors. At the \emph{query level}, AMBER evaluates the expected information gain of these proposed views, scheduling VLM calls exclusively for queries where the comparison is highly uncertain and consequential for head-ranking precision.

Crucially, AMBER's design is heavily grounded in theory. We demonstrate that its Elo updates can be interpreted as online stochastic optimization of the Bradley--Terry model \citep{robbins1951stochastic}. Furthermore, an idealized query-level information-gain objective is a monotone submodular function, yielding a $(1-1/e)$ approximation guarantee for greedy allocation under this objective \citep{nemhauser1978analysis}. Extensive experiments on CIRR \citep{cirr}, CIRCO \citep{circo}, and PhotoBench \citep{xu2026photobench} show that AMBER achieves
strong overall performance among the compared multi-call VLM
reranking methods under comparable budgets and remains effective
in low-budget settings.

Our main contributions are as follows:
\begin{itemize}[leftmargin=5pt]
    \vspace{-6pt}
    \item We are the first to formulate VLM-based listwise reranking as a \textbf{budgeted multi-view evidence aggregation problem}, moving beyond fixed schedules to dynamic global resource allocation.
    \vspace{-8pt}
    \item We propose \textbf{AMBER}, an online framework that maintains a lightweight global ranking state by converting local listwise VLM outputs into continuous Elo-style updates.
    \vspace{-8pt}
    \item We design a \textbf{two-level allocation strategy} that jointly selects high-uncertainty candidate views and schedules queries based on maximum expected information gain.
    \vspace{-8pt}
    \item We provide a rigorous \textbf{theoretical foundation}, connecting AMBER's Elo updates to online maximum-likelihood optimization and establishing submodular approximation guarantees for an idealized information-gain objective \citep{nemhauser1978analysis}, complemented by strong results on three multimodal retrieval benchmarks.
    \vspace{-5pt}
    
\end{itemize}

%% file: text/preliminary.tex
\section{Preliminary}
We study second-stage vision-language model (VLM) reranking for
multimodal retrieval.
Let $\mathcal{Q}=\{q_{1},q_{2},\ldots,q_{m}\}$ denote the full query
set of size $m$.
For each query $q_{j}\in\mathcal{Q}$, a first-stage embedding retriever
returns a candidate pool ordered by embedding similarity:
\begin{equation}
  \mathcal{C}_{q_{j}} = \{c_{1}, c_{2}, \ldots, c_{N}\}.
  \label{eq:candidate_pool}
\end{equation}
The ultimate goal is to produce a refined ranking $\hat{\sigma}_{q_{j}}$
that places the most relevant candidates at the top.

A listwise VLM reranker takes $q_{j}$ and a small candidate subset as
input to output a local ranking.
At any reranking round $t$, the VLM can only observe a limited
\emph{local view}
\begin{equation}
  V_{q_{j}}^{(t)} \subset \mathcal{C}_{q_{j}},
  \qquad
  \bigl|V_{q_{j}}^{(t)}\bigr| = M, \quad M \ll N.
  \label{eq:local_view}
\end{equation}
Given the substantial computational cost per VLM call, we assume a
fixed global call budget
\begin{equation}
  B = b \cdot m,
  \label{eq:budget}
\end{equation}
where $b$ is the average number of VLM calls permitted per query.

AMBER maintains a query-specific state for each $q_{j}$:
\begin{equation}
  \mathcal{S}_{q_{j}}
  = \Bigl(
      \{s_{ij}\}_{i=1}^{N},\;
      \{m_{ij}\}_{i=1}^{N},\;
      \mathcal{O}_{q_{j}},\;
      \{r_{ij}\}_{i=1}^{N},\;
      U_{q_{j}}
    \Bigr),
  \label{eq:state}
\end{equation}
where $s_{ij}$ is the current Elo score of candidate $c_{i}$ for query
$q_{j}$, $m_{ij}$ is its VLM observation count, $\mathcal{O}_{q_{j}}$
is the set of observed candidates, $r_{ij}$ is the global rank of
$c_{i}$ induced by the current Elo state for $q_{j}$, and $U_{q_{j}}$
is the query-level scheduling utility score.

%% file: text/methodology.tex
\section{Methodology}

\begin{figure*}[t]
  \centering
  \vspace{-15pt}
  \includegraphics[width=1.0\textwidth]{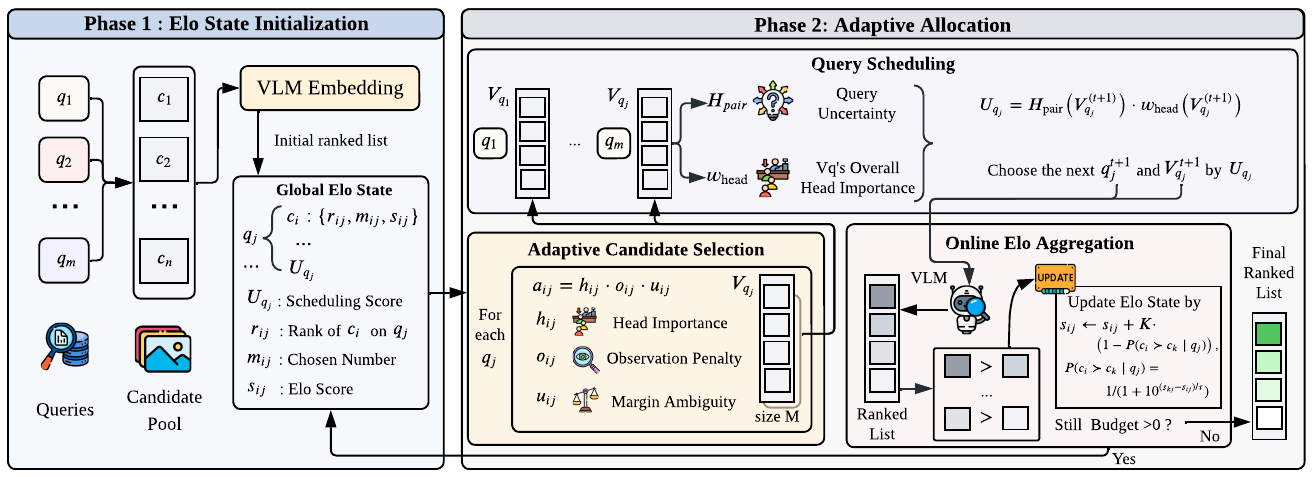}
  \vspace{-20pt}
  \caption{The overall architecture of AMBER.}
  \vspace{-13pt}
  \label{fig: Introduction}
\end{figure*}

To address the aforementioned challenges under a strict budget
constraint, we propose \textbf{AMBER}, a dynamic framework that
jointly optimises query allocation, candidate selection, and
local-to-global ranking aggregation.
We elaborate on the three core mechanisms below.

\subsection{Online Cross-View Elo Aggregation}

Since individual VLM calls observe different candidate subsets, the
aggregation of local rankings must be incremental and cross-view
compatible.
AMBER conceptualises this process as an online rating mechanism.
Given a local VLM ranking $\pi_{q_{j}}^{(t)}$ over view
$V_{q_{j}}^{(t)}$, we decompose the local ranking into pairwise outcomes.
If candidate $c_{i}$ is ranked above $c_{k}$ in $\pi_{q_{j}}^{(t)}$,
we record a ``win'':
\begin{equation}
  c_{i} \succ c_{k}.
  \label{eq:win}
\end{equation}
For each outcome $c_{i}\succ c_{k}$, the expected winning probability
based on the current state $\mathcal{S}_{q_{j}}$ is computed as:
\begin{equation}
  P(c_{i}\succ c_{k}\mid q_{j})
  = \frac{1}{1+10^{(s_{kj}-s_{ij})/\tau}},
  \label{eq:expected_win}
\end{equation}
where $\tau$ is the Elo scaling factor.
The candidates' scores are then updated simultaneously:
\begin{align}
  s_{ij} &\leftarrow s_{ij} + K\!\left(1-P(c_{i}\succ c_{k}\mid q_{j})\right),
  \label{eq:elo_update_i}\\[4pt]
  s_{kj} &\leftarrow s_{kj} - K\!\left(1-P(c_{i}\succ c_{k}\mid q_{j})\right),
  \label{eq:elo_update_k}
\end{align}
where $K$ dictates the update step size.

Initial Elo scores $\{s_{ij}\}$ are assigned using the first-stage
embedding ranking, acting as a prior. Specifically, for the candidate
ranked $i$-th in the embedding-based candidate pool
$\mathcal{C}_{q_j}$, we initialise
\begin{equation}
  s_{ij}^{(0)}
  =
  s_{0}
  +
  R\cdot
  \frac{N-i}{N-1},
  \label{eq:elo_init}
\end{equation}
where $N=|\mathcal{C}_{q_j}|$, $s_{0}$ is the base rating, and $R$
controls the strength of the embedding prior. This linear
initialisation gives higher-ranked candidates from the first-stage
retriever larger initial Elo scores, while allowing subsequent VLM
observations to revise the ranking through online updates.

Following each update, observed candidates are re-sorted by their Elo
scores, while unobserved candidates are appended based on their
original embedding order.

\subsection{Adaptive Candidate Selection}

When query $q_{j}$ is selected for a VLM call, AMBER must construct
the most informative local view $V_{q_{j}}^{(t+1)}$.
We assign each candidate $c_{i}$ a sampling value $a_{ij}$ that
integrates three strategic factors.

\paragraph{1.\ Head Importance.}
We prioritise candidates near the top of the current ranking, aligning
with metrics such as Recall@$K$ and mAP@$K$:
\begin{equation}
  h_{ij} = \frac{1}{\log_{2}(r_{ij}+1)}.
  \label{eq:head_importance}
\end{equation}

\paragraph{2.\ Observation Penalty.}
To prevent redundant evaluations, we introduce diminishing returns for
frequently observed candidates:
\begin{equation}
  o_{ij} = \frac{1}{\sqrt{m_{ij}+1}}.
  \label{eq:obs_penalty}
\end{equation}

\paragraph{3.\ Local Margin Ambiguity.}
We favour candidates whose relative ordering is highly uncertain.
Let $\mathcal{N}_{\kappa}^{(j)}(c_{i})$ denote the set of candidates
within $\kappa$ rank positions of $c_{i}$ under query $q_{j}$.
The minimal local Elo gap of $c_{i}$ is
\begin{equation}
  g_{ij} = \min_{c_{l}\,\in\,\mathcal{N}_{\kappa}^{(j)}(c_{i})}
            \left|s_{ij} - s_{lj}\right|,
\end{equation}
yielding the margin ambiguity factor:
\begin{equation}
  u_{ij} = \frac{1}{1+g_{ij}/\tau_{c}},
  \label{eq:margin_ambiguity}
\end{equation}
where $\tau_{c}$ controls gap sensitivity.
Smaller margins denote ambiguous ordering, increasing the sampling
value.

The final candidate sampling value is the product of these three
factors:

\begin{align}
  a_{ij}
  &= h_{ij}\cdot o_{ij}\cdot u_{ij} \nonumber \\
  &= \frac{1}{\log_{2}(r_{ij}+1)}
    \cdot\frac{1}{\sqrt{m_{ij}+1}}
    \cdot\frac{1}{1+g_{ij}/\tau_{c}}.
  \label{eq:sampling_value}
\end{align}
The system then constructs the next view by selecting the candidates
with the highest values:
\begin{equation}
  V_{q_{j}}^{(t+1)} = \operatorname{TopM}_{c_{i}\,\in\,\mathcal{C}_{q_{j}}}\;a_{ij}.
  \label{eq:next_view}
\end{equation}

\subsection{Entropy-Based Query Scheduling}

At the query level, AMBER dynamically allocates the remaining global
budget based on the expected utility of the \emph{next} VLM call.
Rather than assessing query difficulty in isolation, we condition the
scheduling score $U_{q_{j}}$ on the specific view $V_{q_{j}}^{(t+1)}$
that would be evaluated.

For each eligible query $q_{j}$, we simulate its next view
$V_{q_{j}}^{(t+1)}$ and estimate the pairwise uncertainty within it.
For every pair $(c_{i},c_{k})$ in this view, let
$p_{ik}^{(j)} := P(c_{i}\succ c_{k}\mid q_{j})$ denote their Elo
winning probability.
The normalised binary entropy derived from $p_{ik}^{(j)}$ is:
\begin{equation}
  \begin{aligned}
    H\!\left(p_{ik}^{(j)}\right) = {} & -\frac{1}{\log 2} \Big[ p_{ik}^{(j)}\log p_{ik}^{(j)} \\
    & + \left(1-p_{ik}^{(j)}\right)\log\!\left(1-p_{ik}^{(j)}\right) \Big].
  \end{aligned}
  \label{eq:pair_entropy}
\end{equation}

The view-level pairwise entropy is the average uncertainty over all
comparisons in the view:
\begin{equation}
  \begin{aligned}
    H_{\mathrm{pair}}\!\left(V_{q_{j}}^{(t+1)}\right) = {} & \frac{2}{M(M-1)} \\
    & \times \sum_{\substack{c_{i},\,c_{k}\,\in\,V_{q_{j}}^{(t+1)}\\i<k}} H\!\left(p_{ik}^{(j)}\right).
  \end{aligned}
  \label{eq:view_entropy}
\end{equation}
To ensure the budget serves top-ranking improvements, we scale this
entropy by the view's average head importance:
\begin{equation}
  w_{\mathrm{head}}\!\left(V_{q_{j}}^{(t+1)}\right)
  = \left(
      \frac{1}{M}
      \sum_{c_{i}\,\in\,V_{q_{j}}^{(t+1)}}
      \frac{1}{\log_{2}(r_{ij}+1)}
    \right)^{\!1/2}.
  \label{eq:head_weight}
\end{equation}
The final query scheduling score is formulated as:
\begin{equation}
  U_{q_{j}}
  = H_{\mathrm{pair}}\!\left(V_{q_{j}}^{(t+1)}\right)
    \cdot w_{\mathrm{head}}\!\left(V_{q_{j}}^{(t+1)}\right).
  \label{eq:scheduling_score}
\end{equation}
This ensures the system strictly prioritises queries where the
designated VLM call will resolve the most critical and uncertain head
comparisons.

\subsection{Overall Algorithm}

The overall AMBER framework operates in two distinct phases,
summarised in Algorithm~\ref{alg:AMBER}.

\paragraph{Phase~1 (Initial Reranking).}
To establish baseline observations, every query $q_{j}\in\mathcal{Q}$
receives exactly one VLM call.
Local rankings are obtained and translated into initial Elo updates,
initialising $\mathcal{S}_{q_{j}}$ for all $j$.

\paragraph{Phase~2 (Adaptive Allocation).}
The framework enters an iterative budget allocation loop.
While the total spent budget $S$ is less than $B$, AMBER computes
the query scheduling score $U_{q_{j}}$ for all eligible queries.
The queries with the highest $U_{q_{j}}$ are granted a VLM call for
their respective $V_{q_{j}}^{(t+1)}$.
Elo states $\mathcal{S}_{q_{j}}$ are continuously updated until the
budget is exhausted, at which point the final hybrid rankings
$\hat{\sigma}_{q_{j}}$ are returned.

\begin{algorithm}[t!]
\caption{AMBER Reranking}
\label{alg:AMBER}
\begin{algorithmic}[1]
\Require Query set $\mathcal{Q}=\{q_{1},\ldots,q_{m}\}$,
         candidate pools $\{\mathcal{C}_{q_{j}}\}_{j=1}^{m}$,
         view size $M$, total budget $B$
\Ensure  Final rankings $\{\hat{\sigma}_{q_{j}}\}_{j=1}^{m}$
\For{each query $q_{j}\in\mathcal{Q}$}
  \State Initialise Elo scores $\{s_{ij}\}$ from embedding ranks
  \State Initialise observation counts $m_{ij}\leftarrow 0$,
         observed set $\mathcal{O}_{q_{j}}\leftarrow\emptyset$
  \State Initialise global ranks $\{r_{ij}\}$ by embedding order
\EndFor
\State \textit{Phase 1: Initial Reranking}
\For{each query $q_{j}\in\mathcal{Q}$}
  \State Select view $V_{q_{j}}^{(1)}$ by candidate sampling
  \State Obtain local ranking $\pi_{q_{j}}^{(1)}$ from the VLM
  \State Convert $\pi_{q_{j}}^{(1)}$ into pairwise outcomes;
         update $\{s_{ij}\}$, $\{m_{ij}\}$, $\mathcal{O}_{q_{j}}$,
         and $\{r_{ij}\}$
\EndFor
\State $S \leftarrow m$
\State \textit{Phase 2: Adaptive Allocation}
\While{$S < B$}
  \For{each eligible query $q_{j}$}
    \State Build next view $V_{q_{j}}^{(t+1)}$ via
           Eq.~\eqref{eq:next_view}
    \State Compute scheduling score
           $U_{q_{j}} = H_{\mathrm{pair}}(V_{q_{j}}^{(t+1)})
                        \cdot w_{\mathrm{head}}(V_{q_{j}}^{(t+1)})$
  \EndFor
  \State Select queries with the largest $U_{q_{j}}$
  \For{each selected query $q_{j}$}
    \State Rerank $V_{q_{j}}^{(t+1)}$ with the VLM;
           obtain $\pi_{q_{j}}^{(t+1)}$
    \State Update $\{s_{ij}\}$ from pairwise outcomes
    \State Update $\{m_{ij}\}$, $\mathcal{O}_{q_{j}}$,
           and $\{r_{ij}\}$
  \EndFor
  \State Update spent budget $S$
\EndWhile
\For{each query $q_{j}\in\mathcal{Q}$}
  \State Sort observed candidates by Elo scores
  \State Append unobserved candidates by embedding order
  \State Return final ranking $\hat{\sigma}_{q_{j}}$
\EndFor
\end{algorithmic}
\end{algorithm}

\subsection{Theoretical Analysis}
\label{sec:theory}

We provide theoretical grounding for AMBER's three core mechanisms
under a generative assumption on VLM outputs.
All proofs are deferred to Appendix~\ref{app:proofs}.

\begin{assumption}[Bradley--Terry Model]
\label{ass:bt}
The pairwise preferences produced by the VLM follow a Bradley--Terry
(BT) model: there exist true latent scores
$\mathbf{s}^{*}=\{s_{i}^{*}\}_{i=1}^{N}$ such that for any pair
$(c_{i},c_{k})$,
\begin{equation}
  P^{*}(c_{i}\succ c_{k})=\frac{1}{1+10^{(s_{k}^{*}-s_{i}^{*})/\tau}},
  \label{eq:bt}
\end{equation}
and VLM outputs for distinct pairs are conditionally independent given
$\mathbf{s}^{*}$.
\end{assumption}

\begin{proposition}[Elo Updates as Online MLE]
\label{prop:mle}
Under Assumption~\ref{ass:bt}, each Elo update
\emph{(Eqs.~\eqref{eq:elo_update_i}--\eqref{eq:elo_update_k})}
applied to a single observed outcome $c_{i}\succ c_{k}$ is equivalent
to a stochastic gradient ascent step on the BT log-likelihood with
effective learning rate $\eta=K\tau/\ln 10$.
\end{proposition}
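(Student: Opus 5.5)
The plan is to write down the single-observation Bradley--Terry log-likelihood under Assumption~\ref{ass:bt} and compute its gradient with respect to the current scores $(s_{ij}, s_{kj})$. For an observed outcome $c_i \succ c_k$, the likelihood contribution is
\begin{equation*}
  \ell(\mathbf{s}) = \log P(c_i \succ c_k \mid q_j) = -\log\!\left(1 + 10^{(s_{kj}-s_{ij})/\tau}\right).
\end{equation*}
Here the model probability of Eq.~\eqref{eq:bt} is evaluated at the current Elo state in place of $\mathbf{s}^{*}$. I will rewrite $10^{x/\tau} = e^{x\ln 10/\tau}$. This puts the probability in logistic form, $P = \sigma\!\left((s_{ij}-s_{kj})\ln 10/\tau\right)$ with $\sigma$ the standard sigmoid. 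The derivative is then immediate from $\frac{d}{dz}\log\sigma(z) = 1-\sigma(z)$.

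The key computation is the chain rule:
\begin{equation*}
  \frac{\partial \ell}{\partial s_{ij}} = \frac{\ln 10}{\tau}\bigl(1 - P(c_i\succ c_k\mid q_j)\bigr),
  \qquad
  \frac{\partial \ell}{\partial s_{kj}} = -\frac{\ln 10}{\tau}\bigl(1 - P(c_i\succ c_k\mid q_j)\bigr).
\end{equation*}
All other coordinates have zero partial derivative, because $\ell$ depends only on these two scores. A gradient ascent step $\mathbf{s} \leftarrow \mathbf{s} + \eta \nabla \ell$ therefore changes only $s_{ij}$ and $s_{kj}$, by $\pm \eta \frac{\ln 10}{\tau}(1-P)$. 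Matching this to Eqs.~\eqref{eq:elo_update_i}--\eqref{eq:elo_update_k} requires $\eta \ln 10/\tau = K$, that is, $\eta = K\tau/\ln 10$, as claimed. Both coordinates are evaluated at the same pre-update $P$, which is exactly the "simultaneous" update described in the paper.

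Next I will justify the word \emph{stochastic}. Under Assumption~\ref{ass:bt}, the VLM outcomes for distinct pairs are conditionally independent. The full log-likelihood over all observed pairwise outcomes is therefore a sum of such single-outcome terms $\ell$. Processing one observed outcome at a time thus takes a gradient step on one summand, which is a sampled component of the total objective. This is the standard online/SGD interpretation in the sense of \citet{robbins1951stochastic}.

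The main obstacle is not the calculus but stating precisely what "equivalent" means. One must identify the Elo state with the BT parameter being estimated, and note that the outcomes obtained from decomposing a single listwise ranking are applied sequentially or simultaneously. Either way, each individual outcome corresponds to one gradient step. I will state this explicitly and keep the claim at the per-outcome level that the proposition asserts.
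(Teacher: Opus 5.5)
Your proposal is correct and matches the paper's proof. Both write the single-outcome BT log-likelihood $-\log\bigl(1+10^{(s_{kj}-s_{ij})/\tau}\bigr)$, compute the two partial derivatives as $\pm(\ln 10/\tau)(1-P)$, and match them to the Elo increment $K(1-P)$ to get $\eta = K\tau/\ln 10$. Your extra remark, that the full likelihood is a sum of conditionally independent per-outcome terms, justifies the word ``stochastic'' and is a small clarification the paper leaves implicit.
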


\begin{corollary}[Convergence Rate]
\label{cor:convergence}
Under Assumption~\ref{ass:bt} and a weak $\ell_{2}$-regularized BT
objective, with decaying step size $K_{t}=K_{0}/\sqrt{t}$, the Elo
estimates satisfy
\begin{equation}
  \mathbb{E}\!\left[
    \ell\!\left(\mathbf{s}^{\mathrm{MLE}}\right)
    -\ell\!\left(\hat{\mathbf{s}}_{T}\right)
  \right]
  =O\!\left(\frac{1}{\sqrt{T}}\right),
\end{equation}
where $T$ is the total number of processed pairwise outcomes.
\end{corollary}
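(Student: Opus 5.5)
The corollary follows from Proposition~\ref{prop:mle} combined with the standard convergence theory for projected stochastic gradient methods on concave objectives. I would work with the regularized objective
\begin{equation}
  \ell(\mathbf{s}) = \mathbb{E}_{(i,k)}\!\left[\log \sigma_{10}\!\left((s_{i}-s_{k})/\tau\right)\right] - \frac{\lambda}{2}\|\mathbf{s}-\mathbf{s}^{(0)}\|_{2}^{2},
\end{equation}
where $\sigma_{10}(x)=1/(1+10^{-x})$ and the expectation is over the sampling of observed outcomes under Assumption~\ref{ass:bt}. Here $\mathbf{s}^{\mathrm{MLE}}$ denotes its maximizer, which exists and is unique because the regularizer makes $\ell$ strongly concave even when the comparison graph is disconnected.

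\textbf{Step 1 (unbiased gradients).} By Proposition~\ref{prop:mle}, each Elo update on an observed outcome is $\hat{\mathbf{s}}_{t+1}=\hat{\mathbf{s}}_{t}+\eta_{t}\,\mathbf{g}_{t}$ with $\eta_{t}=K_{t}\tau/\ln 10$. Here $\mathbf{g}_{t}$ is the gradient of the single-pair log-likelihood. I would add the regularizer's gradient as a weight-decay term, which is the meaning of ``weak'' regularization: a small $\lambda$ that modifies the update only negligibly. Conditional on the past, $\mathbb{E}[\mathbf{g}_{t}]=\nabla\ell(\hat{\mathbf{s}}_{t})$. This unbiasedness is exactly where conditional independence in Assumption~\ref{ass:bt} is used.

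\textbf{Step 2 (bounded gradients).} The single-pair gradient has entries $\pm(\ln 10/\tau)(1-P)$, so its norm is at most $\sqrt{2}\ln 10/\tau$. The regularization term is bounded as long as the iterates stay in a bounded set. To guarantee this, I would assume (or enforce) projection onto a ball $\mathcal{D}$ of radius $D$ containing $\mathbf{s}^{\mathrm{MLE}}$. On $\mathcal{D}$ we then have $\mathbb{E}\|\mathbf{g}_{t}\|^{2}\le G^{2}$.

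\textbf{Step 3 (standard analysis).} Expanding $\|\hat{\mathbf{s}}_{t+1}-\mathbf{s}^{\mathrm{MLE}}\|^{2}$, using concavity in the form
\begin{equation}
  \ell(\mathbf{s}^{\mathrm{MLE}})-\ell(\hat{\mathbf{s}}_{t})\le\langle\nabla\ell(\hat{\mathbf{s}}_{t}),\mathbf{s}^{\mathrm{MLE}}-\hat{\mathbf{s}}_{t}\rangle,
\end{equation}
taking expectations, and telescoping with $\eta_{t}=\eta_{0}/\sqrt{t}$ gives the averaged-iterate bound
\begin{equation}
  \frac{1}{T}\sum_{t}\mathbb{E}\big[\ell(\mathbf{s}^{\mathrm{MLE}})-\ell(\hat{\mathbf{s}}_{t})\big]\le \frac{D^{2}}{2\eta_{0}\sqrt{T}}+\frac{G^{2}\eta_{0}\,(1+\ln T)}{\sqrt{T}}.
\end{equation}

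\textbf{Step 4 (removing the logarithm).} The bound above carries a $\log T$ factor, which the stated $O(1/\sqrt{T})$ rate does not. I would handle this in one of two ways. The first option is to state the result for the averaged iterate $\hat{\mathbf{s}}_{T}=\frac{1}{T}\sum_{t}\hat{\mathbf{s}}_{t}$ and apply Jensen's inequality. The second is to use the Shamir--Zhang last-iterate bound or suffix averaging. Both still leave a $\log T$, so I would fall back on the weighted-averaging argument. Alternatively, I would note that strong concavity from $\lambda$ gives an even faster rate, so the $1/\sqrt{T}$ statement holds a fortiori, at the cost of constants that depend on $\lambda$.

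I expect Step 4, together with the bounded-domain requirement in Step 2, to be the main obstacle. The raw Elo recursion has no projection, and the corollary's phrasing about the iterate $\hat{\mathbf{s}}_{T}$ is loose. I would make both points explicit as mild technical conditions: a projection or bounded-iterate assumption, and averaging of the iterates.
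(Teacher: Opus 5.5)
Your route is the paper's: Proposition~\ref{prop:mle} turns the Elo recursion into stochastic gradient ascent, the weak $\ell_{2}$ term makes the maximiser unique, and a standard SGA rate finishes the job. The paper gives no more than the Remark after the proof of Proposition~\ref{prop:mle}, which appeals to classical SGA theory via Robbins--Monro. That theory's condition $\sum_t\eta_t^{2}<\infty$ is not even met by $\eta_t\propto1/\sqrt{t}$, so writing out a Zinkevich/Nemirovski-type analysis, as you do, is the right idea. As written, however, Steps~3--4 do not reach the stated rate. The $\ln T$ you are fighting is self-inflicted, and both of your proposed remedies are mis-stated.

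\textbf{The logarithm.} Your Step~3 bound is what you get by summing $\eta_t\,\mathbb{E}[\ell(\mathbf{s}^{\mathrm{MLE}})-\ell(\hat{\mathbf{s}}_t)]$ and dividing by $\sum_t\eta_t$. It therefore controls the $\eta_t$-weighted average, and the $\ln T$ is $\sum_t\eta_t^{2}=\eta_0^{2}\sum_t 1/t$. So falling back on weighted averaging goes in the wrong direction. Instead, divide the one-step inequality by $2\eta_t$ before summing. Since $1/\eta_t$ is nondecreasing and $\|\hat{\mathbf{s}}_t-\mathbf{s}^{\mathrm{MLE}}\|\le D$, Abel summation bounds the telescoping part by $D^{2}/(2\eta_T)$. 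Together with $\sum_{t\le T}\eta_t\le2\eta_0\sqrt{T}$ this gives
\[
\frac{1}{T}\sum_{t=1}^{T}\mathbb{E}\bigl[\ell(\mathbf{s}^{\mathrm{MLE}})-\ell(\hat{\mathbf{s}}_t)\bigr]\le\frac{D^{2}}{2\eta_0\sqrt{T}}+\frac{G^{2}\eta_0}{\sqrt{T}},
\]
with no logarithm. Jensen then transfers the bound to the uniform average of the iterates.

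\textbf{Strong concavity.} With $\eta_t\propto1/\sqrt{t}$, strong concavity does not give a faster rate. The stochastic gradient has nonvanishing variance at the optimum, so $\mathbb{E}\|\hat{\mathbf{s}}_t-\mathbf{s}^{\mathrm{MLE}}\|^{2}$ is of order $\eta_t$; an $O(1/T)$ rate would need $\eta_t\propto1/(\lambda t)$. Used properly, though, strong concavity proves the corollary exactly as stated: for the last iterate, and without any projection.
\begin{itemize}
\item[(i)] Take weight decay toward $\mathbf{s}^{(0)}$ with $\lambda\eta_0\le1$, and let $G_0=\sqrt{2}\ln10/\tau$. Then $\|\hat{\mathbf{s}}_{t+1}-\mathbf{s}^{(0)}\|\le(1-\lambda\eta_t)\|\hat{\mathbf{s}}_t-\mathbf{s}^{(0)}\|+\eta_tG_0$. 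So every iterate lies within $G_0/\lambda$ of $\mathbf{s}^{(0)}$, and so does the maximiser (from $\nabla\ell=0$). This gives you $D=2G_0/\lambda$ and $G=2G_0$ for free.
\item[(ii)] Let $e_t=\mathbb{E}\|\hat{\mathbf{s}}_t-\mathbf{s}^{\mathrm{MLE}}\|^{2}$. $\lambda$-strong concavity gives $e_{t+1}\le(1-2\lambda\eta_t)e_t+\eta_t^{2}G^{2}$, and the usual induction $e_t\le A/\sqrt{t}$ yields $e_T=O(1/\sqrt{T})$.
\item[(iii)] The objective is $L$-smooth with $L\le(\ln10)^{2}/(2\tau^{2})+\lambda$. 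Hence $\mathbb{E}[\ell(\mathbf{s}^{\mathrm{MLE}})-\ell(\hat{\mathbf{s}}_T)]\le\tfrac{L}{2}e_T=O(1/\sqrt{T})$.
\end{itemize}
All constants degrade as $\lambda\to0$. ``Weak'' regularisation is paid for in the constant, not in the rate.

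Finally, unbiasedness in Step~1 needs more than conditional independence. For $\mathbb{E}[\mathbf{g}_t\mid\text{past}]=\nabla\ell(\hat{\mathbf{s}}_t)$ to hold with a \emph{fixed} $\ell$, the compared pairs must also be drawn from a fixed, state-independent distribution. AMBER instead builds its views from the current Elo state. State this idealisation explicitly; the paper makes it silently.
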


\begin{proposition}[Sample Complexity for Top-$K$ Recovery]
\label{prop:sample}
Under Assumption~\ref{ass:bt}, suppose the true scores satisfy
$s_{1}^{*}>s_{2}^{*}>\cdots>s_{N}^{*}$, and let
$\Delta_{K}=s_{K}^{*}-s_{K+1}^{*}$ denote the score gap at the
$K$-th boundary.  If every straddling pair $(c_{i},c_{k})$ with
$i\leq K<k$ is observed at least
\begin{equation}
  m \;\geq\; \frac{C\,\tau^{2}\ln(N/\delta)}{\Delta_{K}^{2}}
  \label{eq:sample_complexity}
\end{equation}
times for an absolute constant $C>0$, then with probability at least
$1-\delta$ the top-$K$ set induced by the Elo scores equals the true
top-$K$ set.
\end{proposition}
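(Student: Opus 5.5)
The natural approach is to reduce top-$K$ recovery to correctly ordering every straddling pair, and to control each such pair with a Hoeffding bound plus a union bound. Rather than analysing the full Elo trajectory, we use the estimator the statement implicitly relies on: by Proposition~\ref{prop:mle}, Elo scores are (approximately) BT maximum-likelihood estimates, and under Assumption~\ref{ass:bt} the induced order between $c_i$ and $c_k$ agrees with the sign of the empirical win frequency. For a straddling pair $(c_i,c_k)$ with $i\le K<k$, let $\hat p_{ik}$ be the fraction of the $m$ observed comparisons in which $c_i\succ c_k$. By conditional independence these are i.i.d.\ Bernoulli$(p^*_{ik})$. Because the true scores are strictly decreasing, $s_i^*-s_k^*\ge\Delta_K$, so the bias satisfies
\[
p^*_{ik}-\tfrac12 \;\ge\; \frac{1}{1+10^{-\Delta_K/\tau}}-\frac12 \;\ge\; c_0\min\!\left(\frac{\Delta_K}{\tau},1\right),
\]
where $c_0=\ln 10/8$ comes from the logistic slope at the origin. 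We will state the result in the regime $\Delta_K\lesssim\tau$; the case $\Delta_K>\tau$ is only easier.

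Hoeffding then gives
\[
\Pr\!\left[\hat p_{ik}\le \tfrac12\right]\le \exp\!\left(-2m\,(p^*_{ik}-\tfrac12)^2\right)\le \exp\!\left(-2c_0^2 m\Delta_K^2/\tau^2\right).
\]
There are at most $K(N-K)\le N^2/4$ straddling pairs, so a union bound gives total failure probability at most $N^2\exp(-2c_0^2 m\Delta_K^2/\tau^2)$. This is at most $\delta$ once $m\ge \tau^2\bigl(2\ln N+\ln(1/\delta)\bigr)/(2c_0^2\Delta_K^2)$, which is implied by \eqref{eq:sample_complexity} with, say, $C=1/c_0^2$, since $2\ln N+\ln(1/\delta)\le 2\ln(N/\delta)$ for $\delta<1$.

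On the resulting good event, every top-$K$ item beats every bottom item in the empirical majority. It remains to show that the Elo-induced top-$K$ set equals $\{c_1,\dots,c_K\}$. This final step is the main obstacle, because Elo scores are not literally pairwise majority votes. We would first handle it by working with the BT-MLE $\hat{\mathbf s}$ that Elo tracks (Proposition~\ref{prop:mle}, Corollary~\ref{cor:convergence}) and proving the following. If $\hat{\mathbf s}$ ranked some $c_k$ with $k>K$ above some $c_i$ with $i\le K$, then swapping their scores would strictly increase the log-likelihood, since $c_i$ wins the majority of their head-to-head comparisons. This contradicts optimality. To make the swap argument clean when comparisons with third parties differ, we may need to assume a balanced design, in which each straddling item faces the same opponents equally often. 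Alternatively, we can state the result for the Borda/Copeland score aggregated from the same decomposed outcomes, noting that Elo with small $K$ and a symmetric schedule is a monotone function of net wins.

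A second fallback is to bound $\|\hat{\mathbf s}-\mathbf s^*\|_\infty<\Delta_K/2$ directly. This would follow from standard BT-MLE $\ell_\infty$ concentration at rate $\tau\sqrt{\ln(N/\delta)/m}$, which again gives \eqref{eq:sample_complexity}, absorbing the constant into $C$. In either route, the residual optimisation error from finite-$T$ Elo is handled by taking $T$ large via Corollary~\ref{cor:convergence}.
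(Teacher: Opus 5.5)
There is a genuine gap at the step you yourself flag as the main obstacle. Your Hoeffding-plus-union-bound computation is correct for $\Delta_K\lesssim\tau$, but it controls the wrong event. Your opening claim, that the Elo/MLE order of $c_i$ and $c_k$ agrees with the sign of their empirical head-to-head frequency, is false. Fitted BT scores depend on every comparison involving $c_i$ or $c_k$, and your swap argument ignores exactly those third-party terms. In the straddling-only design, which the hypothesis permits, $c_i$ meets only bottom items and $c_k$ only top items, so no balanced-design assumption can make those terms cancel.

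Here is a concrete failure. Take $N=4$, $K=2$, with every straddling pair observed $100$ times. Let $c_1$ beat $c_3$ in $99$ comparisons, and let the top item win $51$ of $100$ in each of $(c_1,c_4)$, $(c_2,c_3)$, $(c_2,c_4)$. The MLE exists, since every pair has both outcomes. Write $\sigma(x)=1/(1+e^{-x})$ and measure scores in units of $\tau/\ln 10$.
\begin{itemize}
\item The likelihood equations of $c_1$ and $c_3$ force $\hat s_1-\hat s_4=\hat s_2-\hat s_3=:z$. Hence $x+y=2z$, where $x=\hat s_1-\hat s_3$ and $y=\hat s_2-\hat s_4$.
\item The equations of $c_1$ and $c_2$ read $\sigma(x)+\sigma(z)=1.5$ and $\sigma(z)+\sigma(y)=1.02$.
\item If $y\geq 0$, then $\sigma(z)\leq 0.52$ gives $z\leq 0.08$. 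But $\sigma(x)\geq 0.98$ gives $x\geq\ln 49$, so $z=(x+y)/2\geq 1.9$, a contradiction.
\end{itemize}
So the MLE ranks $c_4$ above $c_2$, although every straddling majority is correct. Balanced designs do not rescue this: in a round robin with equal counts, the MLE order equals the total-win order, and straddling majorities do not control that either.

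The Copeland fallback is valid: on your event each top item has at least $N-K$ majority wins and each bottom item at most $N-K-1$. However, it proves the proposition for a different aggregator, and your bridge back to Elo fails. Small-step Elo with equal initial ratings approximates a net-win (Borda) count, not Copeland. With AMBER's non-uniform prior (Eq.~\eqref{eq:elo_init}), a small step instead makes the final order dominated by the initialisation.

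What is needed is concentration of the scores themselves, which is the paper's route. Its sketch imports from SGA concentration results a high-probability bound $|\hat s_{ij}-s_i^*|\leq O(\tau/\sqrt m)$ for the Elo iterates directly. Then $|\hat s_{ij}-s_i^*|+|\hat s_{kj}-s_k^*|<\Delta_K$ on every straddling pair settles the ordering, and a union bound gives Eq.~\eqref{eq:sample_complexity}.

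Your $\ell_\infty$ fallback is this argument, and it makes the Hoeffding step redundant. However, you obtain the bound only for the batch BT-MLE and then reach Elo through Corollary~\ref{cor:convergence} by taking $T$ large. That bridge does not hold, for three reasons:
\begin{itemize}
\item $T$ is the number of processed outcomes, so it is fixed by the data, not a free parameter.
\item The corollary bounds only an expected objective gap under decaying steps and regularisation. This yields at best a parameter bound in expectation, with polynomial rather than logarithmic dependence on $1/\delta$.
\item With AMBER's constant step, the Elo iterates do not converge to the MLE at all.
\end{itemize}
The missing ingredient is a per-coordinate high-probability bound for the Elo iterates under a suitable step schedule. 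The paper takes this as a black box from the cited SGA results.

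Finally, $\Delta_K>\tau$ is not the easy case for your argument. Your bias bound saturates at $c_0$, so in that regime Hoeffding needs $m\gtrsim\ln(N/\delta)$ rather than the stated $\tau^2\ln(N/\delta)/\Delta_K^2$.
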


\noindent

\noindent
Proposition~\ref{prop:sample} motivates the three factors in
the candidate sampling value $a_{ij}$
(Eq.~\eqref{eq:sampling_value}):

\begin{itemize}[leftmargin=6pt]
  \item \textbf{Head importance} $h_{ij}$: the required observation
        count concerns only pairs straddling the top-$K$ boundary.
        Candidates near the head of the ranking are far more likely to
        be involved in such pairs, justifying
        $h_{ij}=1/\log_{2}(r_{ij}+1)$ as a proximity-to-boundary
        weight.

  \item \textbf{Observation penalty} $o_{ij}$: once a candidate has
        been observed $m_{ij}$ times, further comparisons yield
        diminishing reductions in estimation error (the
        $O(1/\sqrt{m})$ bound).
        The factor $o_{ij}=1/\sqrt{m_{ij}+1}$ implements exactly this
        diminishing-returns discount, preventing redundant allocation.

  \item \textbf{Margin ambiguity} $u_{ij}$:
        Eq.~\eqref{eq:sample_complexity} shows that the required
        observation count scales as $1/\Delta_{K}^{2}$.  Candidates
        whose current Elo gap $g_{ij}$ to their nearest neighbour is
        small have an estimated $\Delta_{K}\approx g_{ij}$ and
        therefore require the most additional observations.  The factor
        $u_{ij}=1/(1+g_{ij}/\tau_{c})$ assigns higher priority to
        precisely these ambiguous candidates.
\end{itemize}

\begin{definition}
\label{def:uncertainty_reduction}
For query $q_{j}$, let $\mathcal{A}_{q_{j}}$ denote the set of VLM
calls already made.  The \emph{uncertainty reduction} achieved by
$\mathcal{A}_{q_{j}}$ is
\begin{align}
  f_{q_{j}}(\mathcal{A}_{q_{j}}) = {} & H_{\mathrm{pair}}\!\left(V_{q_{j}}^{(t+1)}\mid\emptyset\right) \nonumber \\
  & - H_{\mathrm{pair}}\!\left(V_{q_{j}}^{(t+1)}\mid\mathcal{A}_{q_{j}}\right).
\end{align}
\end{definition}

\begin{proposition}[Submodularity of Uncertainty Reduction]
\label{prop:submodular}
Under Assumption~\ref{ass:bt}, $f_{q_{j}}$ is a non-negative,
monotone, submodular function of $\mathcal{A}_{q_{j}}$.
\end{proposition}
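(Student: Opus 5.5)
The plan is to read $H_{\mathrm{pair}}(V_{q_j}^{(t+1)}\mid\mathcal{A}_{q_j})$ as an average of conditional entropies of latent pairwise variables. For each pair $(c_i,c_k)$ in the fixed target view $V_{q_j}^{(t+1)}$, let $Z_{ik}$ be the latent preference of that pair under Assumption~\ref{ass:bt}. Concretely, $Z_{ik}$ is the Bernoulli variable with success probability $P^{*}(c_i\succ c_k)$ from Eq.~\eqref{eq:bt}, with the prior over $\mathbf{s}^{*}$ induced by the Elo state. Each call $a\in\mathcal{A}_{q_j}$ produces an observation $Y_a$, namely the pairwise outcomes it reveals. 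Then
\begin{equation*}
  f_{q_j}(\mathcal{A}_{q_j})=\frac{2}{M(M-1)}\sum_{i<k}\Bigl[H(Z_{ik})-H\bigl(Z_{ik}\mid \mathbf{Y}_{\mathcal{A}_{q_j}}\bigr)\Bigr]
  =\frac{2}{M(M-1)}\sum_{i<k} I\bigl(Z_{ik};\mathbf{Y}_{\mathcal{A}_{q_j}}\bigr).
\end{equation*}
A nonnegative combination of submodular functions is submodular, and the same holds for monotonicity and nonnegativity. It therefore suffices to prove the three properties for a single term $g_{ik}(\mathcal{A})=I(Z_{ik};\mathbf{Y}_{\mathcal{A}})$.

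\textbf{Non-negativity and monotonicity.} Non-negativity of $g_{ik}$ is non-negativity of mutual information, and $g_{ik}(\emptyset)=0$. Monotonicity follows from ``conditioning reduces entropy'': for $\mathcal{A}\subseteq\mathcal{B}$ we have $H(Z_{ik}\mid\mathbf{Y}_{\mathcal{B}})\le H(Z_{ik}\mid\mathbf{Y}_{\mathcal{A}})$. This uses only the chain rule and needs no structural assumption.

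\textbf{Submodularity.} I would write
\begin{equation*}
  I(Z_{ik};\mathbf{Y}_{\mathcal{A}})=H(\mathbf{Y}_{\mathcal{A}})-H(\mathbf{Y}_{\mathcal{A}}\mid Z_{ik}).
\end{equation*}
The joint entropy $\mathcal{A}\mapsto H(\mathbf{Y}_{\mathcal{A}})$ is submodular; this is the classical consequence of ``conditioning reduces entropy'' applied to the marginal gain $H(Y_a\mid\mathbf{Y}_{\mathcal{A}})$. The key step is then to show that $\mathcal{A}\mapsto H(\mathbf{Y}_{\mathcal{A}}\mid Z_{ik})$ is modular, i.e.
\begin{equation*}
  H(\mathbf{Y}_{\mathcal{A}}\mid Z_{ik})=\sum_{a\in\mathcal{A}}H(Y_a\mid Z_{ik}).
\end{equation*}
This holds whenever the call observations are conditionally independent given $Z_{ik}$, and a submodular function minus a modular one is submodular.

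\textbf{Main obstacle.} The hard step is justifying that conditional independence. Assumption~\ref{ass:bt} gives conditional independence of distinct pairwise outputs given the full vector $\mathbf{s}^{*}$, not given the single pair variable $Z_{ik}$. I would handle this with the idealisation already implicit in Definition~\ref{def:uncertainty_reduction}, under which a call informs the uncertainty about pair $(c_i,c_k)$ only through its own noisy observation of that pair.
\begin{itemize}
  \item If a call contains the pair, it contributes an independent BT draw whose law depends on $\mathbf{s}^{*}$ only through $P^{*}(c_i\succ c_k)$.
  \item If a call does not contain the pair, it is treated as independent of $Z_{ik}$ and contributes zero to $g_{ik}$.
\end{itemize}
Under this reading, the $Y_a$ relevant to $(i,k)$ are i.i.d.\ given $Z_{ik}$, so the modular decomposition holds. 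The diminishing-returns inequality
\begin{equation*}
  g_{ik}(\mathcal{A}\cup\{a\})-g_{ik}(\mathcal{A})\;\ge\; g_{ik}(\mathcal{B}\cup\{a\})-g_{ik}(\mathcal{B}),\qquad \mathcal{A}\subseteq\mathcal{B},
\end{equation*}
then follows, and summing over pairs gives the claim for $f_{q_j}$. If this per-pair factorisation is judged too strong, my fallback is to state explicitly that the result holds for this per-pair observation model, since the cross-pair coupling through shared scores in $\mathbf{s}^{*}$ is exactly what can break submodularity.
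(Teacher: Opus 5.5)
Your non-negativity and monotonicity arguments are fine, but the submodularity step has a genuine gap. The modular decomposition $H(\mathbf{Y}_{\mathcal{A}}\mid Z_{ik})=\sum_{a\in\mathcal{A}}H(Y_a\mid Z_{ik})$ is false for the $Z_{ik}$ you need.

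To reproduce the binary entropies in $H_{\mathrm{pair}}$, $Z_{ik}$ must be binary: a draw with success probability $P^{*}(c_i\succ c_k)$. Repeated observations of the pair are i.i.d.\ given the \emph{parameter} $\theta_{ik}:=P^{*}(c_i\succ c_k)$, which is what your bullet actually asserts. They are not i.i.d.\ given a single binary realisation $Z_{ik}$: conditioning on $Z_{ik}$ still leaves them coupled through the unknown $\theta_{ik}$.

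The per-pair model does not repair this, and the conclusion itself can fail. Since $Z_{ik}$ is exchangeable with the observations, setting $J(n)=I(\theta_{ik};Y_{1:n})$ gives
\[
I(Z_{ik};Y_{1:n}) \;=\; J(1)-\bigl[J(n+1)-J(n)\bigr].
\]
Monotonicity follows from concavity of $J$. Diminishing returns, however, requires the increments of $J$ to be convex, and that can fail.

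For instance, let the prior put probabilities $\varepsilon/2,\,1-\varepsilon,\,\varepsilon/2$ on $s_i^{*}-s_k^{*}\in\{-L,0,L\}$, with $L\gg\tau$. For $1\le n\ll\log_2(1/\varepsilon)$ one gets $I(Z_{ik};Y_{1:n})\approx\varepsilon^{2}/(2\ln 2\,P_n)$, where $P_n=\varepsilon+(1-\varepsilon)2^{1-n}$ is the probability that all $n$ outcomes agree. The second and third observations of the pair then have marginal gains of roughly $c$ and $2c$, with $c=\varepsilon^{2}/(2\ln 2)$. This violates submodularity even for a single pair observed i.i.d.

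The paper avoids this by making the target the full latent vector. Its argument works with $F_{q_j}(\mathcal{A})=H(\mathbf{s}^{*}_{q_j})-H(\mathbf{s}^{*}_{q_j}\mid\mathcal{A})=I(\mathbf{s}^{*}_{q_j};\mathbf{Y}_{\mathcal{A}})$. Assumption~\ref{ass:bt} supplies exactly the conditional independence given $\mathbf{s}^{*}$ that makes $H(\mathbf{Y}_{\mathcal{A}}\mid\mathbf{s}^{*})$ modular. So your entropy-minus-modular argument goes through with $\mathbf{s}^{*}$ in place of $Z_{ik}$, and without any per-pair idealisation. That idealisation is, as you note yourself, an extra hypothesis not contained in Assumption~\ref{ass:bt}.

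The paper then relates $f_{q_j}$ to $F_{q_j}$ only as a view-restricted, head-weighted surrogate; it never proves submodularity of the binary-entropy quantity itself. The right repair is therefore to change the objective to information about latent parameters, not to strengthen the independence assumption. That means $\mathbf{s}^{*}$ as in the paper, or $\theta_{ik}$ under your per-pair model. If you insist on the pairwise binary entropies of $H_{\mathrm{pair}}$, you need an additional condition on the prior that guarantees convex increments of $J$.
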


\begin{corollary}[Greedy Approximation Guarantee]
\label{cor:greedy}
Let $\mathcal{A}^{*}$ be the optimal allocation of the remaining budget
after Phase~1 that maximises $\sum_{j}f_{q_{j}}(\mathcal{A}_{q_{j}})$.
Under the idealized setting of Proposition~\ref{prop:submodular},
the greedy policy that selects the query with the largest exact marginal
gain at each step achieves
\begin{equation}
  \sum_{j}f_{q_{j}}(\hat{\mathcal{A}}_{q_{j}})
  \;\geq\;
  \left(1-\frac{1}{e}\right)\sum_{j}f_{q_{j}}(\mathcal{A}_{q_{j}}^{*}),
\end{equation}
where $\hat{\mathcal{A}}_{q_{j}}$ is the set of calls allocated by
this idealized greedy policy. The scheduling score $U_{q_{j}}$
(Eq.~\eqref{eq:scheduling_score}) used by AMBER serves as a tractable,
head-weighted proxy for the exact marginal gain
$f_{q_{j}}(\mathcal{A}_{q_{j}}\cup\{x\})-f_{q_{j}}(\mathcal{A}_{q_{j}})$.
\end{corollary}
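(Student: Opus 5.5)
The plan is to reduce the corollary to the classical Nemhauser--Wolsey--Fisher result for maximising a monotone submodular function under a cardinality constraint. The reduction uses Proposition~\ref{prop:submodular} together with a partition-matroid-free reformulation of the budget.

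\textbf{Step 1: recast the allocation problem as a single set-function problem.} Define a ground set $\mathcal{X}=\{(j,\ell): q_j\in\mathcal{Q},\ \ell=1,\dots,B-m\}$. Element $(j,\ell)$ stands for ``the $\ell$-th additional VLM call given to query $q_j$''. In the idealized setting, the call content is determined by the state, so a multiset of calls per query is encoded by a subset of $\mathcal{X}$. For $A\subseteq\mathcal{X}$ write $A_j=\{(j,\ell)\in A\}$ and set
\[
F(A)=\sum_{j=1}^{m} f_{q_j}(A_j).
\]
The remaining-budget constraint becomes $|A|\le B-m$, which is a plain cardinality constraint.

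\textbf{Step 2: transfer the properties of $f_{q_j}$ to $F$.} By Proposition~\ref{prop:submodular}, each $f_{q_j}$ is non-negative, monotone and submodular. Non-negativity and monotonicity pass to $F$ because $F$ is a sum and each summand depends only on $A_j$. For submodularity, take $A\subseteq A'$ and $x=(j,\ell)\notin A'$. Adding $x$ changes only the $j$-th summand, so the marginal gain of $F$ is
\[
F(A\cup\{x\})-F(A)=f_{q_j}(A_j\cup\{x\})-f_{q_j}(A_j).
\]
Since $A_j\subseteq A'_j$, submodularity of $f_{q_j}$ gives diminishing returns. Also $F(\emptyset)=0$, because $f_{q_j}(\emptyset)=H_{\mathrm{pair}}(\cdot\mid\emptyset)-H_{\mathrm{pair}}(\cdot\mid\emptyset)=0$. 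Hence $F$ is a normalized monotone submodular function. Selecting the query with the largest exact marginal gain is exactly selecting the element of $\mathcal{X}$ with the largest marginal gain of $F$.

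\textbf{Step 3: invoke the greedy bound.} Apply the standard argument. Let $A^{(i)}$ be the greedy set after $i$ steps and let $k=B-m$. By monotonicity and submodularity,
\[
F(\mathcal{A}^{*})\le F(A^{(i)})+\sum_{x\in\mathcal{A}^*}\bigl[F(A^{(i)}\cup\{x\})-F(A^{(i)})\bigr]\le F(A^{(i)})+k\,\delta_{i+1},
\]
where $\delta_{i+1}$ is the greedy gain at step $i+1$. Rearranging yields $F(\mathcal{A}^*)-F(A^{(i+1)})\le(1-1/k)\bigl(F(\mathcal{A}^*)-F(A^{(i)})\bigr)$. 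Iterating $k$ times and using $(1-1/k)^k\le 1/e$ gives the claimed $(1-1/e)$ bound.

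\textbf{Main obstacle.} The delicate point is Step 1. The marginal gain of a call depends on the view $V_{q_j}^{(t+1)}$, which itself depends on earlier outcomes, so the objective is really adaptive. I would handle this by working in the idealized non-adaptive setting fixed in Proposition~\ref{prop:submodular}: calls are treated as predetermined elements, and expected entropy reduction is taken as the objective. If this is not enough, I would fall back on adaptive submodularity (Golovin--Krause), which gives the same $(1-1/e)$ constant.

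The final sentence of the corollary, that $U_{q_j}$ is a head-weighted proxy for the marginal gain, is interpretive rather than formal. I would only remark that $H_{\mathrm{pair}}(V_{q_j}^{(t+1)})$ upper-bounds the entropy a single call can remove from the pairs in that view. This justifies using it as a surrogate for the marginal gain, without claiming the guarantee transfers to AMBER itself.
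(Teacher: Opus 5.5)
Your proposal is correct and follows the paper's own route. The paper simply invokes the Nemhauser--Wolsey--Fisher greedy bound for the sum of per-query monotone submodular objectives under a cardinality budget; your disjoint-ground-set reformulation, the check that $F(\emptyset)=0$, and the explicit $(1-1/k)^k$ recursion just supply the details it leaves implicit. One caution: keep to the non-adaptive idealization, which is all the paper uses, because your fallback to Golovin--Krause adaptive submodularity would not work as stated. Expected entropy reduction is not adaptive submodular in general, since a posterior update can make a pending test's outcome \emph{more} uncertain.
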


\paragraph{Limitations.}
The results above rest on Assumption~\ref{ass:bt}. In practice, VLM
outputs may exhibit positional bias and context sensitivity, and
Proposition~\ref{prop:submodular} assumes conditional independence
across calls, whereas AMBER's Elo state couples successive
observations. The practical score $U_{q_j}$ is also a tractable proxy
for the theoretical marginal gain. These gaps motivate the empirical
evaluation in Section~\ref{sec:experiments}.

%% file: text/experiments.tex
\section{Experiments}
\label{sec:experiments}

\subsection{Experimental Setup}

\paragraph{Datasets.}
We evaluate our method on three multimodal retrieval benchmarks: CIRR~\cite{cirr},
CIRCO~\cite{circo}, and PhotoBench~\cite{xu2026photobench}. CIRR is a composed image retrieval benchmark
where each query consists of a reference image and a relative
modification text, and each query has one target image. This makes the
evaluation especially sensitive to the exact position of the target
image in the ranked list. CIRCO is a more challenging composed image
retrieval benchmark where each query may have multiple valid target
images. Therefore, in addition to Recall@$K$, mAP@$K$ is an important
metric for evaluating the quality of the ranked list. PhotoBench
evaluates retrieval over personal photo albums. We use the merged-album
setting and report results separately for Chinese and English queries.

\begin{table}[t]
\centering
\caption{Summary of dataset statistics, scales, and evaluation characteristics.}
\vspace{-10pt}
\label{tab:dataset_specs}
\renewcommand{\arraystretch}{1.2} 
\resizebox{\columnwidth}{!}{
\begin{tabular}{@{}lrrcl@{}}
\toprule
\textbf{Dataset} & \textbf{\# Images} & \textbf{\# Queries} & \textbf{Targets / Query} & \textbf{Language} \\ 
\midrule
\textbf{CIRR} & 36,000+ & 36,554 & Single & English \\ 
\addlinespace
\textbf{CIRCO} & 123,403 & 1,020 & Multiple (Avg. 4.53) & English \\ 
\addlinespace
\textbf{PhotoBench} & 3,582 & 1,188 & Intent-driven & EN / CN \\ 
\bottomrule
\vspace{-10pt}
\end{tabular}%
} 
\end{table}

\paragraph{Evaluation Metrics.}
For CIRR, we report Recall@1, Recall@2, Recall@5, Recall@10, and
Recall@50. For
CIRCO, we report Recall@5, Recall@10, Recall@25, Recall@50, and
mAP@5, mAP@10, mAP@25, and mAP@50. For PhotoBench, we report
Recall@$K$ and NDCG@$K$. Some PhotoBench queries have empty
ground-truth sets. We keep these queries in evaluation and assign zero
to all their metrics, rather than skipping them. All PhotoBench numbers
are multiplied by 100.

\paragraph{Implementation Details.}
All experiments use Qwen3-VL-Embedding-8B as the first-stage retriever
and Qwen3-VL-8B-Instruct as the VLM reranker. The first-stage retriever produces
the initial candidate pool, and all reranking methods are evaluated on
the same recalled candidates for fair comparison. Unless otherwise
specified, the candidate pool size is set to 50 and the VLM local view
size is set to 10. A VLM call denotes one listwise reranking call over
one local candidate view. All experiments are conducted without
chain-of-thought prompting or supervised fine-tuning.

Our method, denoted as \textbf{AMBER}, performs online Elo aggregation
to update candidate scores from local VLM listwise outputs. Unless
otherwise specified, AMBER uses view-conditioned pairwise entropy for
query-level budget allocation and head-aware observation-margin sampling
for candidate-level view selection. 

We evaluate AMBER under global budgets averaging 24 and 8 calls per query
in the main and low-budget settings, respectively.

\paragraph{Baselines.}
We compare AMBER with embedding-only retrieval baselines and several
multi-call VLM reranking baselines. The embedding-only baselines include
QQMM-Embed-v2~\cite{xue2025improvemultimodalembeddinglearning},
RzenEmbed-v2-7B~\cite{jian2025rzenembedcomprehensivemultimodalretrieval},
and Qwen3-VL-Embedding-8B~\cite{li2026qwen3vlembeddingqwen3vlrerankerunifiedframework}.
For VLM reranking, we compare with the following baselines:
\begin{itemize}[leftmargin=6pt]
    \item \textbf{AcuRank}~\cite{yoon2025acurankuncertaintyawareadaptivecomputation}:
    an uncertainty-aware adaptive reranking baseline.
    \item \textbf{Sliding Window}: applies listwise reranking over fixed local
    windows of the candidate list.
    \item \textbf{TourRank}~\cite{chen2025tourrankutilizinglargelanguage}:
    a tournament-style reranking baseline that compares candidates through
    grouped local competitions.
    \item \textbf{UniRank}: a non-adaptive uniform-budget baseline that assigns
    the same number of VLM calls to each query and repeatedly reranks the
    current top-10 candidates.
\end{itemize}

\subsection{Main Results}
\begin{table*}[t]
\centering
\small
\setlength{\tabcolsep}{4.2pt}
\renewcommand{\arraystretch}{0.98}
\vspace{-20pt}
\caption{
Main reranking results across datasets.
R@$K$ denotes Recall@$K$, and N@$K$ denotes NDCG@$K$.
QQMM-Embed-v2,
RzenEmbed-v2-7B,
and Qwen3-VL-Embedding-8B
are embedding-only retrieval baselines; all VLM reranking methods are
evaluated on the same top-50 candidate pools retrieved by
Qwen3-VL-Embedding-8B. For CIRCO, we report R@5 and mAP metrics, while
for PhotoBench, Avg. N@10 averages Chinese and English NDCG@10 scores.
All PhotoBench values are multiplied by 100. Avg. Rank is computed over
the displayed metrics within each dataset; lower is better. Best and
second-best results are shown in bold and underlined, respectively.
Full results are reported in Appendix~\ref{app:full_results}.
}
\vspace{-10pt}
\begin{tabular}{@{}llccccccc@{}}
\toprule
\multirow{9}{*}{\textbf{CIRR}}
& \textbf{Method} & \textbf{Calls}
& \textbf{R@1} & \textbf{R@2} & \textbf{R@5} & \textbf{R@10} & \textbf{R@50}
& \textbf{Avg. Rank} \\
\cmidrule(l){2-9}
& QQMM-Embed-v2            & --       & 38.43 & 59.80 & \textbf{80.36} & \textbf{89.23} & \textbf{97.59} & 3.40 \\
& RzenEmbed-v2-7B          & --       & 19.30 & 44.96 & 70.36 & 83.08 & 96.77 & 8.00 \\
& Qwen3-VL-Embedding-8B    & --       & 48.80 & 63.81 & 78.39 & 86.55 & \underline{96.94} & 4.00 \\
& AcuRank                  & $\sim$24 & 50.63 & 63.78 & 77.90 & 85.06 & \underline{96.94} & 5.00 \\
& Sliding Window           & 24       & 51.11 & 63.86 & 76.05 & 86.55 & \underline{96.94} & 4.00 \\
& TourRank                 & 24       & 51.37 & 64.63 & 77.13 & 85.52 & \underline{96.94} & 4.00 \\
& UniRank                  & 24       & \underline{53.42} & \underline{66.58} & \underline{80.19} & 86.55 & \underline{96.94} & \underline{2.20} \\
& \textbf{AMBER}         & 24       & \textbf{54.07} & \textbf{66.60} & 79.30 & \underline{87.64} & \underline{96.94} & \textbf{1.80} \\

\midrule
\multirow{9}{*}{\textbf{CIRCO}}
& \textbf{Method} & \textbf{Calls}
& \textbf{R@5} & \textbf{mAP@5} & \textbf{mAP@10} & \textbf{mAP@25} & \textbf{mAP@50}
& \textbf{Avg. Rank} \\
\cmidrule(l){2-9}
& QQMM-Embed-v2            & --       & 71.12 & 45.92 & 47.13 & 50.39 & 51.45 & 6.00 \\
& RzenEmbed-v2-7B          & --       & 60.75 & 32.20 & 34.19 & 37.41 & 38.61 & 8.00 \\
& Qwen3-VL-Embedding-8B    & --       & 63.38 & 44.60 & 45.96 & 48.96 & 50.08 & 7.00 \\
& AcuRank                  & $\sim$24 & 73.75 & 56.28 & \underline{56.88} & 58.49 & 59.52 & 2.80 \\
& Sliding Window           & 24       & 71.50 & 54.52 & 55.64 & 58.33 & 58.86 & 4.40 \\
& TourRank                 & 24       & \underline{74.00} & \underline{56.45} & 56.74 & \underline{59.30} & \underline{59.95} & \underline{2.20} \\
& UniRank                  & 24       & 71.62 & 55.09 & 53.09 & 56.00 & 57.13 & 4.60 \\
& \textbf{AMBER}         & 24       & \textbf{74.75} & \textbf{58.80} & \textbf{58.73} & \textbf{61.17} & \textbf{61.78} & \textbf{1.00} \\

\midrule
\multirow{8}{*}{\textbf{PhotoBench}}
& \textbf{Method} & \textbf{Calls}
& \textbf{CN N@1} & \textbf{CN N@10} & \textbf{EN N@1} & \textbf{EN N@10} & \textbf{Avg. N@10}
& \textbf{Avg. Rank} \\
\cmidrule(l){2-9}
& QQMM-Embed-v2            & --       & 46.69 & 49.65 & 44.38 & 47.47 & 48.56 & 6.00 \\
& RzenEmbed-v2-7B          & --       & 48.54 & 50.07 & 45.64 & 48.24 & 49.16 & 5.00 \\
& Qwen3-VL-Embedding-8B    & --       & 44.52 & 45.54 & 41.95 & 44.13 & 44.84 & 7.00 \\
& AcuRank                  & $\sim$24 & \underline{51.97} & 52.27 & 49.33 & 50.37 & 51.32 & 3.20 \\
& Sliding Window           & 24       & 51.38 & \underline{52.35} & 49.08 & 50.75 & \underline{51.55} & \underline{2.80} \\
& TourRank                 & 24       & 51.30 & 51.98 & \underline{50.50} & \underline{50.93} & 51.46 & 3.00 \\
& \textbf{AMBER}         & 24       & \textbf{54.06} & \textbf{53.89} & \textbf{52.60} & \textbf{52.27} & \textbf{53.08} & \textbf{1.00} \\
\bottomrule
\end{tabular}
\vspace{-10pt}
\label{tab:main_results}
\end{table*}

Table~\ref{tab:main_results} summarizes the results under the main
24-call setting across CIRR, CIRCO, and PhotoBench. AMBER achieves the strongest
overall performance among the compared reranking baselines under the
same top-50 candidate pools and comparable VLM-call budgets.

On CIRR, AMBER achieves the best R@1 and R@2 among all methods in the
table, and the strongest overall performance among VLM reranking baselines.
Compared with TourRank, the strongest reranking baseline on R@1, AMBER
improves R@1 from 51.37 to 54.07. This indicates that online Elo aggregation
is particularly effective for improving the very top position in
single-target composed image retrieval. Although QQMM-Embed-v2 obtains
higher R@5 and R@10 as an embedding-only reference, AMBER consistently
improves over the Qwen3-VL-Embedding-8B candidate pool used by all reranking
methods.

On CIRCO, AMBER achieves the best mAP across all reported cutoffs.
The gains are more pronounced on mAP@25 and mAP@50, indicating that
AMBER improves not only whether relevant images appear in the
candidate head, but also their relative ordering. This is important for
CIRCO, where each query may have multiple valid targets.

On PhotoBench, AMBER consistently improves ranking quality for both
Chinese and English queries. It obtains the best NDCG@1 and NDCG@10
for both languages, suggesting that the proposed budgeted multi-view
aggregation is also effective in personal photo retrieval scenarios.


\subsection{Low-Budget Results}

\begin{table*}[t]
\centering
\small
\setlength{\tabcolsep}{4.2pt}
\renewcommand{\arraystretch}{0.98}

\caption{
Low-budget reranking results on CIRR, CIRCO, and PhotoBench.
Fixed-budget methods use 8 VLM calls per query, while AcuRank reports
its average calls under native stopping. Best and second-best results
are shown in bold and underlined, respectively.
}
\label{tab:low_budget_results}
\vspace{-6pt}

\begin{tabular}{@{}llccccccc@{}}
\toprule

\multirow{7}{*}{\textbf{CIRR}}
& \textbf{Method}
& \textbf{Calls}
& \textbf{R@1}
& \textbf{R@2}
& \textbf{R@5}
& \textbf{R@10}
& \textbf{R@50}
& \textbf{Avg. Rank} \\
\cmidrule(l){2-9}

& Qwen3-VL-Embedding-8B
& --
& 48.80
& 63.81
& 78.39
& 86.55
& 96.94
& 3.20 \\

& AcuRank
& 10.53
& 49.95
& 62.89
& 75.23
& 80.60
& 96.94
& 4.40 \\

& Sliding Window
& 8
& 49.71
& 62.94
& 77.18
& \underline{88.63}
& 96.94
& 3.40 \\

& TourRank
& 8
& 51.57
& 63.69
& 74.36
& 81.74
& 96.94
& 3.80 \\

& UniRank
& 8
& \underline{53.23}
& \underline{66.68}
& \underline{79.49}
& 86.55
& 96.94
& \underline{2.00} \\

& \textbf{AMBER}
& 8
& \textbf{54.31}
& \textbf{67.35}
& \textbf{79.98}
& \textbf{88.72}
& 96.94
& \textbf{1.00} \\

\midrule

\multirow{7}{*}{\textbf{CIRCO}}
& \textbf{Method}
& \textbf{Calls}
& \textbf{R@5}
& \textbf{mAP@5}
& \textbf{mAP@10}
& \textbf{mAP@25}
& \textbf{mAP@50}
& \textbf{Avg. Rank} \\
\cmidrule(l){2-9}

& Qwen3-VL-Embedding-8B
& --
& 63.38
& 44.60
& 45.96
& 48.96
& 50.08
& 6.00 \\

& AcuRank
& 10.52
& 72.00
& 53.98
& 53.50
& 55.54
& 56.67
& 4.00 \\

& Sliding Window
& 8
& \underline{72.38}
& 54.32
& \underline{55.18}
& \underline{57.67}
& \underline{58.39}
& \underline{2.20} \\

& TourRank
& 8
& 67.12
& 53.06
& 53.18
& 56.43
& 57.11
& 4.00 \\

& UniRank
& 8
& 71.75
& \underline{54.82}
& 53.02
& 55.93
& 57.05
& 3.80 \\

& \textbf{AMBER}
& 8
& \textbf{73.12}
& \textbf{56.80}
& \textbf{56.76}
& \textbf{58.85}
& \textbf{59.70}
& \textbf{1.00} \\

\midrule

\multirow{7}{*}{\textbf{PhotoBench}}
& \textbf{Method}
& \textbf{Calls}
& \textbf{CN N@1}
& \textbf{CN N@10}
& \textbf{EN N@1}
& \textbf{EN N@10}
& \textbf{Avg. N@10}
& \textbf{Avg. Rank} \\
\cmidrule(l){2-9}

& Qwen3-VL-Embedding-8B
& --
& 44.52
& 45.54
& 41.95
& 44.13
& 44.84
& 6.00 \\

& AcuRank
& 10.69
& \underline{52.22}
& 50.57
& \underline{50.92}
& 49.30
& 49.93
& 3.00 \\

& Sliding Window
& 8
& 51.13
& \textbf{51.72}
& 49.08
& \underline{49.89}
& \underline{50.81}
& \underline{2.40} \\

& TourRank
& 8
& 49.62
& 50.49
& 49.92
& 49.73
& 50.11
& 3.40 \\

& UniRank
& 8
& 49.04
& 47.54
& 45.97
& 45.67
& 46.60
& 5.00 \\

& \textbf{AMBER}
& 8
& \textbf{52.72}
& \underline{51.56}
& \textbf{51.68}
& \textbf{50.38}
& \textbf{50.97}
& \textbf{1.20} \\

\bottomrule
\end{tabular}

\vspace{-8pt}
\end{table*}

Table~\ref{tab:low_budget_results} reports results under the reduced
budget setting. With 8 VLM calls per query, AMBER achieves the best
average rank on all three datasets, leading at R@1, R@2, R@5, and
R@10 on CIRR and all reported CIRCO metrics. On PhotoBench, it performs
best on four of the five displayed metrics, while Sliding Window is
slightly better on Chinese NDCG@10. AcuRank uses native stopping and
averages approximately 10.5--10.7 calls per query.
Additional experiments evaluate alternative aggregation rules, budget
scaling, practical inference costs, and generalization to an alternative
VLM reranker; see Appendix~\ref{app:additional_experiments}.

%% file: text/related_works.tex
\section{Related Works}

\subsection{Multimodal Image Retrieval}
Recent advancements leverage Multimodal Large Language Models (MLLMs) to create unified representation and ranking architectures \cite{lin2025mmembeduniversalmultimodalretrieval, li2026qwen3vlembeddingqwen3vlrerankerunifiedframework}, including training-free adaptations \cite{su2026retllmtrainingdatafreemllms}. Beyond similarity matching, methods have evolved to incorporate generative memorization \cite{li2024generativecrossmodalretrievalmemorizing}, conversational query rewriting \cite{hu2026recqrincorporatingconversationalquery}, and Chain-of-Thought reasoning \cite{wu2025chainofthoughtrerankingimageretrieval}. In the complex domain of Composed Image Retrieval (CIR), approaches range from tailored LLMs \cite{huynh2025collmlargelanguagemodel} and zero-shot visual-semantic integration \cite{ye2026finegrainedzeroshotcomposedimage} to autonomous cross-modal agents \cite{Yang_2026}. While these advancements significantly enhance single-query reasoning, scaling their heavy architectures across massive candidate pools remains a systemic bottleneck.

\subsection{LLMs for Listwise Ranking}
The viability of LLMs as listwise rerankers was established by proprietary models \cite{sun2024chatgptgoodsearchinvestigating, ma2023zeroshotlistwisedocumentreranking} and democratized by robust open-source alternatives like RankVicuna and RankZephyr \cite{pradeep2023rankvicunazeroshotlistwisedocument, pradeep2023rankzephyreffectiverobustzeroshot}. Research quickly shifted toward optimizing multi-call strategies, such as tournament-inspired schedules \cite{chen2025tourrankutilizinglargelanguage}, while analyzing inherent limitations like positional bias \cite{qiao2026llmbasedlistwisererankingeffect}. 

To mitigate severe computational costs, a wave of efficiency-driven innovations emerged, including ranking-aware passage compression \cite{compress-and-rank}, minimal reasoning frameworks \cite{liu2026stopoverthinkingunlockingefficient}, embedding-repurposed rerankers \cite{liu2025e2ranktextembeddingeffective}, and early-stopping strategies like EcoRank and AcuRank \cite{rashid2024ecorankbudgetconstrainedtextreranking, yoon2025acurankuncertaintyawareadaptivecomputation}. Recently, this efficient listwise paradigm has successfully expanded to handle visually rich, multimodal tasks \cite{cai2026visionmeetstextslistwise,sun2026efficientlistwisemultimodalreranking}.

Crucially, existing multi-call strategies either apply rigid computation uniformly (e.g., TourRank) or confine adaptivity to single queries (e.g., AcuRank, EcoRank), failing to dynamically redistribute budgets globally. AMBER overcomes this by converting local listwise outputs into continuous Elo updates to maintain a lightweight global ranking state. This enables a dual-level adaptive allocation that prioritizes informative query-candidate comparisons globally. Extended related works are detailed in the Appendix~\ref{app:related_works}.

%% file: text/conclusion.tex
\section{Conclusion}
In this paper, we present AMBER, an online, budgeted multi-view reranking framework that dynamically optimizes global resource allocation for listwise Vision-Language Models (VLMs). 

We further introduce a dual-level allocation strategy: dynamically constructing candidate views based on score ambiguity and scheduling queries to maximize expected information gain.

We theoretically connect AMBER's Elo updates to online
maximum-likelihood optimization under the Bradley--Terry model and
establish submodular approximation guarantees for the information-gain
formulation of query-level allocation.

Experiments on CIRR, CIRCO, and PhotoBench show that AMBER achieves
strong overall performance among the compared multi-call reranking
methods under comparable budgets and remains effective in low-budget
settings.

%% file: text/limitation.tex
\clearpage
\section{Limitations}

First, while our evaluation focuses on image retrieval, our modality-agnostic framework can naturally generalize to other domains like video or audio retrieval. Second, we employ standard listwise prompting without Chain-of-Thought (CoT) reasoning. Notably, AMBER consistently outperforms strong baselines even without CoT's added computational overhead, emphasizing the efficiency of our core allocation strategy. Finally, while our sensitivity analysis confirms robustness, the current hyperparameters are static; automatically tuning these parameters during the reranking loop remains a promising direction for future work.

%% file: text/Appendix.tex
\appendix
\section{Additional Experimental Results}
\label{app:additional_experiments}
\subsection{Full Experimental Results}
\label{app:full_results}

This appendix reports additional full-metric results for CIRCO and
PhotoBench. CIRR results are already fully reported in
Table~\ref{tab:main_results}.

\begin{table*}[h]
\centering
\small
\setlength{\tabcolsep}{4pt}
\caption{
Full results on CIRCO. Best results are shown in bold and
second-best results are underlined.
}
\resizebox{\textwidth}{!}{
\begin{tabular}{lccccccccc}
\toprule
\textbf{Method} & \textbf{Calls}
& \textbf{R@5} & \textbf{R@10} & \textbf{R@25} & \textbf{R@50}
& \textbf{mAP@5} & \textbf{mAP@10} & \textbf{mAP@25} & \textbf{mAP@50} \\
\midrule

QQMM-Embed-v2
& --
& 71.12 & 79.75 & 90.25 & \textbf{95.26}
& 45.92 & 47.13 & 50.39 & 51.45 \\

RZENEmbed-v2-7B
& --
& 60.75 & 72.38 & 85.75 & 92.25
& 32.20 & 34.19 & 37.41 & 38.61 \\

Qwen3-VL-Embedding-8B
& --
& 63.38 & 76.50 & 87.88 & 93.25
& 44.60 & 45.96 & 48.96 & 50.08 \\

AcuRank
& 23.28
& 73.75 & 83.38 & 87.38 & 93.25
& 56.28 & \underline{56.88} & 58.49 & 59.52 \\

Sliding Window
& 24
& 71.50 & \textbf{84.50} & \textbf{91.38} & 93.25
& 54.52 & 55.64 & 58.33 & 58.86 \\

TourRank
& 24
& \underline{74.00} & 82.38 & \underline{91.12} & 93.25
& \underline{56.45} & 56.74 & \underline{59.30} & \underline{59.95} \\

UniRank
& 24
& 71.62 & 76.50 & 87.88 & 93.25
& 55.09 & 53.09 & 56.00 & 57.13 \\

\textbf{AMBER}
& 24
& \textbf{74.75} & \underline{83.50} & 90.75 & 93.25
& \textbf{58.80} & \textbf{58.73} & \textbf{61.17} & \textbf{61.78} \\
\bottomrule
\end{tabular}
}

\label{tab:app_circo_results}
\end{table*}

\begin{table*}[h]
\centering
\small
\setlength{\tabcolsep}{3.5pt}
\caption{
Full PhotoBench results in the merged-album setting under our evaluation
protocol. Queries with empty ground-truth sets are retained and assigned
zero metric values instead of being skipped. All metric values are
multiplied by 100. Best results are shown in bold and second-best results
are underlined.
}
\resizebox{\textwidth}{!}{
\begin{tabular}{llcccccccc}
\toprule
\textbf{Lang.} & \textbf{Method} & \textbf{Calls}
& \textbf{R@1} & \textbf{R@5} & \textbf{R@10} & \textbf{R@20}
& \textbf{NDCG@1} & \textbf{NDCG@10} & \textbf{NDCG@20} \\
\midrule

CN & QQMM-Embed-v2
& --
& 23.56 & 44.53 & 51.63 & 58.51
& 46.69 & 49.65 & 51.62 \\

CN & RZENEmbed-v2-7B
& --
& 24.02 & 44.30 & 51.59 & 58.05
& 48.54 & 50.07 & 51.78 \\

CN & Qwen3-VL-Embedding-8B
& --
& 22.04 & 41.05 & 47.26 & 52.65
& 44.52 & 45.54 & 46.94 \\

CN & AcuRank
& 24.3
& 25.95 & 46.48 & 52.25 & 54.51
& 51.97 & 52.27 & 52.18 \\

CN & Sliding Window
& 24
& 25.85 & 46.14 & \underline{52.76} & 56.58
& 51.38 & 52.35 & \underline{52.89} \\

CN & TourRank
& 24
& 24.89 & 46.13 & 52.65 & \textbf{56.67}
& 51.30 & 51.98 & 52.60 \\

CN & \textbf{AMBER}
& 24
& \textbf{26.91} & \textbf{48.00} & \textbf{53.24} & \underline{56.62}
& \textbf{54.06} & \textbf{53.89} & \textbf{54.22} \\

\midrule

EN & QQMM-Embed-v2
& --
& 22.53 & 43.05 & 50.49 & 57.40
& 44.38 & 47.47 & 49.62 \\

EN & RZENEmbed-v2-7B
& --
& 22.86 & 43.42 & 50.89 & 57.68
& 45.64 & 48.24 & 50.20 \\

EN & Qwen3-VL-Embedding-8B
& --
& 21.20 & 40.64 & 46.57 & 52.32
& 41.95 & 44.13 & 45.63 \\

EN & AcuRank
& 24.2
& 24.62 & 45.45 & 51.00 & 54.21
& 49.33 & 50.37 & 50.68 \\

EN & Sliding Window
& 24
& 24.67 & 45.17 & 51.92 & \underline{56.61}
& 49.08 & 50.75 & 51.72 \\

EN & TourRank
& 24
& \underline{24.84} & \underline{45.81} & 51.64 & 55.75
& \underline{50.50} & \underline{50.93} & \underline{51.76} \\

EN & \textbf{AMBER}
& 24
& \textbf{25.94} & \textbf{46.64} & \textbf{52.26} & \textbf{56.72}
& \textbf{52.60} & \textbf{52.27} & \textbf{53.08} \\

\bottomrule
\end{tabular}
}
\label{tab:app_photobench_results}
\end{table*}

\subsection{Alternative Aggregation Methods}
\label{app:aggregation_methods}

\begin{table*}[t]
\centering
\small
\renewcommand{\arraystretch}{1.05}
\setlength{\tabcolsep}{10pt}
\caption{
Comparison of cross-view aggregation methods on CIRCO.
All methods use the same cached candidate views and VLM outputs,
with only the aggregation rule changed.
Best and second-best results are shown in bold and underlined,
respectively.
}
\label{tab:aggregation_methods}
\vspace{-4pt}

\begin{tabular}{lcccc}
\toprule
\textbf{Aggregation Method}
& \textbf{mAP@5}
& \textbf{mAP@10}
& \textbf{mAP@25}
& \textbf{mAP@50} \\
\midrule

Borda Count
& 54.12
& 54.84
& 57.54
& 58.11 \\

Average Rank
& 55.87
& 56.12
& 58.88
& 59.51 \\

Batch BT-MLE
& \underline{56.78}
& \underline{57.04}
& \underline{59.63}
& \underline{60.19} \\

\textbf{Elo}
& \textbf{58.80}
& \textbf{58.73}
& \textbf{61.17}
& \textbf{61.78} \\

\bottomrule
\end{tabular}

\vspace{-6pt}
\end{table*}

To isolate the effect of cross-view aggregation, we replay the same
cached candidate views and VLM outputs for all methods and change only
the aggregation rule. As shown in
Table~\ref{tab:aggregation_methods}, Elo achieves the best performance
at all four mAP cutoffs, outperforming batch BT-MLE, Borda Count, and
average-rank aggregation. These results support the use of Elo as an
effective online mechanism for aggregating fragmented listwise
rankings across views.

%

\subsection{Sensitivity Analysis}

\begin{table*}[t]
\centering
\scriptsize
\renewcommand{\arraystretch}{1.05}
\setlength{\tabcolsep}{2.5pt}
\caption{
Sensitivity analysis of the Elo initialization range on CIRCO under a
fixed budget of 8 VLM calls per query. The rating range controls the
strength of the embedding-prior initialization: the top-ranked
candidate from the first-stage retriever receives the highest initial
Elo score, while the last candidate receives the base score.
}
\vspace{-4pt}

\begin{tabular*}{\textwidth}{
@{\extracolsep{\fill}}
lcccccccc
@{}
}
\toprule
\textbf{Rating Range}
& \textbf{R@5}
& \textbf{R@10}
& \textbf{R@25}
& \textbf{R@50}
& \textbf{mAP@5}
& \textbf{mAP@10}
& \textbf{mAP@25}
& \textbf{mAP@50} \\
\midrule

100
& 73.00 & 82.25 & 89.38 & 93.25
& 56.36 & 56.39 & 58.52 & 59.42 \\

200
& \textbf{73.25} & \textbf{82.38} & 89.38 & 93.25
& \textbf{56.87} & \textbf{56.84}
& \textbf{58.88} & \textbf{59.76} \\

400
& 73.12 & 82.25 & \textbf{89.50} & 93.25
& 56.85 & 56.79 & 58.87 & 59.75 \\

800
& 73.12 & 82.25 & 89.38 & 93.25
& 56.69 & 56.74 & 58.79 & 59.68 \\

\bottomrule
\end{tabular*}

\label{tab:elo_init_sensitivity}
\vspace{-6pt}
\end{table*}

We further analyze the sensitivity of AMBER to the strength of the
embedding-prior Elo initialization. Table~\ref{tab:elo_init_sensitivity}
reports CIRCO results under a fixed budget of 8 VLM calls per query.
The performance remains stable across rating ranges from 100 to 800,
with the best overall results obtained around 200--400. This indicates
that AMBER does not rely on a finely tuned initialization scale: the
embedding prior provides a useful starting point, while online Elo
updates quickly incorporate VLM preference evidence.

\subsection{Budget-Performance Trade-off}
\label{app:budget_tradeoff}

\begin{table*}[t]
\centering
\scriptsize
\renewcommand{\arraystretch}{1.05}
\setlength{\tabcolsep}{2.5pt}
\caption{
Performance of AMBER on CIRCO under different VLM-call budgets.
Best and second-best results are shown in bold and underlined,
respectively. Ties shared by all budget settings are left unmarked.
}
\vspace{-4pt}

\begin{tabular*}{\textwidth}{
@{\extracolsep{\fill}}
lcccccccc
@{}
}
\toprule
\textbf{Calls / Query}
& \textbf{R@5}
& \textbf{R@10}
& \textbf{R@25}
& \textbf{R@50}
& \textbf{mAP@5}
& \textbf{mAP@10}
& \textbf{mAP@25}
& \textbf{mAP@50} \\
\midrule

4
& 71.88
& 81.00
& 88.00
& 93.25
& 55.65
& 55.42
& 57.33
& 58.43 \\

8
& 73.12
& 81.88
& 89.50
& 93.25
& 56.80
& 56.76
& 58.85
& 59.70 \\

12
& 73.38
& 82.62
& 90.25
& 93.25
& 57.50
& 57.54
& 59.72
& 60.50 \\

16
& \underline{73.75}
& \underline{83.00}
& \underline{90.62}
& 93.25
& \underline{58.02}
& \underline{58.05}
& \underline{60.38}
& \underline{61.07} \\

24
& \textbf{74.75}
& \textbf{83.50}
& \textbf{90.75}
& 93.25
& \textbf{58.80}
& \textbf{58.73}
& \textbf{61.17}
& \textbf{61.78} \\

\bottomrule
\end{tabular*}

\label{tab:budget_tradeoff}
\vspace{-6pt}
\end{table*}

Table~\ref{tab:budget_tradeoff} evaluates AMBER with budgets ranging
from 4 to 24 VLM calls per query. Performance improves consistently as
the available budget increases, with particularly clear gains on the
mAP metrics. At the same time, the 8-call setting already achieves
strong performance, demonstrating a favorable trade-off between
reranking quality and inference cost.


\subsection{Practical Efficiency}
\label{app:practical_efficiency}

\begin{table*}[t]
\centering
\scriptsize
\renewcommand{\arraystretch}{1.05}
\setlength{\tabcolsep}{3.5pt}
\caption{
Practical inference costs measured on the same CIRCO subset using a
shared API backend. The projected charge estimates the cost of
evaluating the full 800-query CIRCO test split.
}
\vspace{-4pt}

\begin{tabular*}{\textwidth}{
@{\extracolsep{\fill}}
lccccccc
@{}
}
\toprule
\textbf{Method}
& \shortstack{\textbf{Calls /}\\\textbf{Query}}
& \shortstack{\textbf{Images /}\\\textbf{Call}}
& \shortstack{\textbf{Input Tokens /}\\\textbf{Call}}
& \shortstack{\textbf{Output Tokens /}\\\textbf{Call}}
& \shortstack{\textbf{Latency / Call}\\\textbf{(s)}}
& \shortstack{\textbf{Cost / Call}\\\textbf{(CNY)}}
& \shortstack{\textbf{Projected Charge}\\\textbf{(CNY)}} \\
\midrule

AMBER
& 8
& 11.00
& 3327.6
& 39.2
& 1.899
& 0.001742
& 11.15 \\

UniRank
& 8
& 11.00
& 3361.6
& 38.0
& 1.763
& 0.001757
& 11.24 \\

TourRank
& 8
& 11.63
& 3419.1
& 41.4
& 1.957
& 0.001829
& 11.70 \\

Sliding Window
& 8
& 11.00
& 3301.6
& 38.0
& 1.825
& 0.001727
& 11.05 \\

AcuRank
& 10.52
& 10.12
& 3067.5
& 35.2
& 1.700
& 0.001604
& 13.50 \\

\bottomrule
\end{tabular*}

\label{tab:practical_efficiency}
\vspace{-6pt}
\end{table*}

Table~\ref{tab:practical_efficiency} shows that the compared methods
have similar per-call image counts, token usage, latency, and API
costs. This supports the number of VLM calls as a practical measure of
inference budget. AMBER additionally requires only lightweight
client-side updates to candidate ratings and observation counts.

\subsection{Results with an Alternative VLM Reranker}
\label{app:alternative_reranker}

\begin{table*}[t]
\centering
\scriptsize
\renewcommand{\arraystretch}{1.05}
\setlength{\tabcolsep}{2.5pt}
\caption{
Results on CIRCO using Qwen3.5-Flash as the VLM reranker.
Qwen3-VL-Embedding-8B remains the first-stage retriever, and all
methods rerank the same top-50 candidate pools.
Best and second-best results are shown in bold and underlined,
respectively. Ties shared by all methods are left unmarked.
}
\vspace{-4pt}

\begin{tabular*}{\textwidth}{
@{\extracolsep{\fill}}
lccccccccc
@{}
}
\toprule
\textbf{Method}
& \textbf{Calls}
& \textbf{R@5}
& \textbf{R@10}
& \textbf{R@25}
& \textbf{R@50}
& \textbf{mAP@5}
& \textbf{mAP@10}
& \textbf{mAP@25}
& \textbf{mAP@50} \\
\midrule

Sliding Window
& 8
& \underline{75.38}
& \textbf{84.75}
& \underline{90.50}
& 93.25
& \underline{59.75}
& \underline{59.32}
& \underline{61.61}
& \underline{62.29} \\

AcuRank
& 10.60
& 75.00
& 81.75
& 88.50
& 93.25
& 59.27
& 58.22
& 60.01
& 60.97 \\

UniRank
& 8
& 73.50
& 76.50
& 87.88
& 93.25
& 57.94
& 55.29
& 58.21
& 59.33 \\

TourRank
& 8
& 72.25
& 81.50
& \textbf{91.62}
& 93.25
& 58.26
& 58.28
& 61.03
& 61.56 \\

\textbf{AMBER}
& 8
& \textbf{75.75}
& \underline{84.12}
& 89.88
& 93.25
& \textbf{60.63}
& \textbf{60.01}
& \textbf{61.83}
& \textbf{62.68} \\

\bottomrule
\end{tabular*}

\label{tab:alternative_reranker}
\vspace{-6pt}
\end{table*}

Table~\ref{tab:alternative_reranker} evaluates the methods after
replacing the default VLM reranker with Qwen3.5-Flash while keeping
the first-stage retriever and candidate pools unchanged. AMBER achieves
the best R@5 and all four mAP metrics, while remaining competitive on
the other Recall cutoffs. These results indicate that AMBER's
effectiveness is not limited to the default VLM reranker.

\section{Extended Related Work}
\label{app:related_works}

\subsection{Multimodal Image Retrieval}

Recent advancements in multimodal image retrieval have heavily
leveraged Multimodal Large Language Models (MLLMs) to create unified
and versatile representations.
For instance, MM-Embed leverages MLLMs to establish a universal
framework capable of handling diverse multimodal retrieval tasks
through cohesive semantic alignment
\cite{lin2025mmembeduniversalmultimodalretrieval}.
Building on this push for unified architectures, the Qwen3-VL framework
provides a comprehensive state-of-the-art pipeline that harmonizes both
the multimodal embedding generation and the subsequent ranking stages
\cite{li2026qwen3vlembeddingqwen3vlrerankerunifiedframework}.
To further reduce deployment overhead, RETLLM proposes a training-free
and data-free approach, adapting existing MLLMs directly for multimodal
information retrieval without the need for extensive fine-tuning
\cite{su2026retllmtrainingdatafreemllms}.

Beyond traditional similarity matching, new paradigms are incorporating
generative strategies and sophisticated query interpretation. Generative
cross-modal retrieval methods have explored memorizing images directly
within the parameters of language models, fundamentally shifting how
visual retrieval and downstream generation tasks are executed
\cite{li2024generativecrossmodalretrievalmemorizing}. 
Beyond atomic image matching, recent work further explores the
composition of coherent image bundles by modeling relational relevance
among multiple images \cite{shan2026weaving}.
Furthermore, to
bridge the semantic gap between ambiguous user inputs and system
representations, ReCQR incorporates conversational query rewriting to
clarify user intent and substantially improve retrieval accuracy
\cite{hu2026recqrincorporatingconversationalquery}. To refine initial
retrieval results, advanced reasoning techniques have been integrated
into the concluding ranking phases. Notably, the application of
Chain-of-Thought (CoT) prompting has been shown to effectively guide
models through step-by-step logical evaluation, thereby improving the
accuracy of image retrieval re-ranking tasks
\cite{wu2025chainofthoughtrerankingimageretrieval}.

A particularly prominent sub-field driving recent innovation is
Composed Image Retrieval (CIR), where queries consist of both a
reference image and textual modification instructions. CoLLM addresses
this complex task by introducing a large language model specifically
tailored to process and synthesize these compound multimodal queries
\cite{huynh2025collmlargelanguagemodel}. Tackling CIR in
resource-constrained environments, recent work has integrated
complementary visual-semantic features to achieve fine-grained,
zero-shot retrieval without requiring paired training data
\cite{ye2026finegrainedzeroshotcomposedimage}. Adding an autonomous
dimension to this process, $X^R$ introduces intelligent cross-modal
agents that actively reason and take steps to resolve intricate
composed image retrieval queries \cite{Yang_2026}.
More recently, OSCAR further formulates agentic CIR as an
optimization-steered planning problem, using optimized retrieval
trajectories to guide the online planning process
\cite{wang2026oscar}.
\subsection{LLMs for Listwise Ranking}

Large Language Models (LLMs) have been increasingly explored in
recommendation and ranking systems, including applications to scoring
and ranking functions \cite{lin2025can}. Within this broader trend,
early work on listwise ranking investigated ChatGPT as a re-ranking
agent \cite{sun2024chatgptgoodsearchinvestigating}. Building on this
foundation, researchers demonstrated the viability of zero-shot listwise
document reranking using proprietary LLMs
\cite{ma2023zeroshotlistwisedocumentreranking}. This approach was
quickly democratized by the development of open-source alternatives
like RankVicuna, which proved that open-weight LLMs could also perform
effective zero-shot listwise reranking
\cite{pradeep2023rankvicunazeroshotlistwisedocument}. Subsequent
models, such as RankZephyr, further refined this paradigm to offer more
robust zero-shot listwise reranking capabilities
\cite{pradeep2023rankzephyreffectiverobustzeroshot}.

As LLM-based rerankers matured, research shifted toward optimizing
ranking strategies and addressing inherent model limitations. For
instance, TourRank introduced a tournament-inspired strategy to
evaluate and structure the document ranking process more effectively
\cite{chen2025tourrankutilizinglargelanguage}. Concurrently, other work
highlighted the persistent challenge of positional bias in LLMs,
thoroughly analyzing its effects on listwise reranking performance
\cite{qiao2026llmbasedlistwisererankingeffect}.

Efficiency and computational costs have also emerged as critical
bottlenecks in deploying listwise ranking systems. To mitigate these
issues, EcoRank proposed a budget-constrained approach for text
re-ranking that balances cost and performance
\cite{rashid2024ecorankbudgetconstrainedtextreranking}. Similarly,
AcuRank implemented an uncertainty-aware adaptive computation method to
dynamically allocate processing resources during reranking
\cite{yoon2025acurankuncertaintyawareadaptivecomputation}. Other
efficiency-driven innovations include the Compress-then-Rank framework,
which accelerates listwise reranking through ranking-aware passage
compression \cite{compress-and-rank}. Researchers have also explored
unlocking efficient reranking with minimal reasoning to prevent
computational overthinking
\cite{liu2026stopoverthinkingunlockingefficient}, while E2Rank
demonstrated that standard text embeddings can be repurposed as highly
efficient listwise rerankers
\cite{liu2025e2ranktextembeddingeffective}.

Most recently, the scope of listwise reranking has expanded beyond pure
text to incorporate multimodal information. Pioneering work has begun
exploring the intersection of vision and text within listwise reranking
frameworks to seamlessly handle diverse data types
\cite{cai2026visionmeetstextslistwise}. Building upon this, highly
efficient listwise multimodal reranking techniques have been developed
specifically to process and rank long, visually rich documents
\cite{sun2026efficientlistwisemultimodalreranking}.
Despite the growing emphasis on efficiency, existing multi-call and
budget-aware strategies share critical limitations. Fixed-schedule
traversal methods such as TourRank apply uniform computation across
queries, which can waste resources on easy queries or uninformative
comparisons. Conversely, adaptive methods such as AcuRank and EcoRank
mitigate this issue through early stopping, but their adaptivity is
primarily confined within individual queries and does not explicitly
redistribute a shared global budget across queries.

AMBER departs from both rigid schedules and isolated intra-query
heuristics. By converting local listwise outputs into continuous
Elo-style updates, AMBER maintains a lightweight cross-view ranking
state without an explicit preference matrix. Building on this state, it
couples candidate-level view construction with query-level budget
allocation, directing VLM calls toward uncertain comparisons that are
more likely to affect head-ranking quality.

More broadly, recent work has explored agentic information retrieval,
where LLM agents interact with information environments through
iterative reasoning and decision making \cite{zhang2024agentic}.
At the system level, practical LLM agents have also been studied
through externalized components such as memory, skills, protocols,
and harness engineering \cite{zhou2026externalization}.

\section{Theoretical Analysis}
\label{app:proofs}

We provide theoretical grounding for AMBER's three core mechanisms
under Assumption~\ref{ass:bt} (Bradley--Terry model) introduced in
Section~\ref{sec:theory}. All results below are stated under this
assumption.

\subsection{Elo Aggregation as Online Maximum Likelihood Estimation}

We first establish that AMBER's update rule corresponds exactly to
stochastic gradient ascent (SGA) on the BT log-likelihood.

\begin{proof}[Proof of Proposition~\ref{prop:mle}]
The BT log-likelihood for a single observation $c_{i}\succ c_{k}$ is
\begin{equation}
\begin{split}
  \ell(s_{ij},s_{kj})
  &=\log P(c_{i}\succ c_{k}\mid q_{j}) \\
  &=-\log\!\left(1+10^{(s_{kj}-s_{ij})/\tau}\right).
\end{split}
\end{equation}
Computing partial derivatives:
\begin{align}
  \frac{\partial\ell}{\partial s_{ij}}
    &=\frac{\ln 10}{\tau}\bigl(1-P(c_{i}\succ c_{k}\mid q_{j})\bigr),\\[4pt]
  \frac{\partial\ell}{\partial s_{kj}}
    &=-\frac{\ln 10}{\tau}\bigl(1-P(c_{i}\succ c_{k}\mid q_{j})\bigr).
\end{align}
The Elo updates in
Eqs.~\eqref{eq:elo_update_i}--\eqref{eq:elo_update_k} can then be
written as
\begin{equation}
\begin{split}
  s_{ij} \leftarrow{} & s_{ij}+K\bigl(1-P(c_{i}\succ c_{k}\mid q_{j})\bigr) \\
         ={} & s_{ij}+\frac{K\tau}{\ln 10}\cdot
            \frac{\partial\ell}{\partial s_{ij}},
\end{split}
\end{equation}

\begin{equation}
\begin{split}
  s_{kj} \leftarrow{} & s_{kj}-K\bigl(1-P(c_{i}\succ c_{k}\mid q_{j})\bigr) \\
         ={} & s_{kj}+\frac{K\tau}{\ln 10}\cdot
            \frac{\partial\ell}{\partial s_{kj}},
\end{split}
\end{equation}
which is precisely gradient ascent with step size
$\eta=K\tau/\ln 10$.
\end{proof}

\begin{remark}
The BT log-likelihood is globally concave but not strongly concave,
due to translation invariance (adding a constant to all scores leaves
every pairwise probability unchanged). A standard remedy is to add a
weak $\ell_{2}$ regularizer $-\lambda\|\mathbf{s}\|^{2}$, which breaks
this degeneracy without affecting the induced ranking. Under this
regularized objective, classical SGA theory
\citep{robbins1951stochastic} guarantees convergence to the unique MLE.
In AMBER, the embedding-prior initialisation
(Eq.~\eqref{eq:elo_init}) further anchors the score scale and plays a
role similar to such regularisation without requiring an explicit
penalty.
\end{remark}

\noindent
Corollary~\ref{cor:convergence} confirms that AMBER's Elo
aggregation provides a statistically consistent estimate of the true
candidate ordering, with error decreasing as more VLM calls are
processed.

\subsection{Sample Complexity and the Rationale for
              Head-Aware Observation-Margin Sampling}

Proposition~\ref{prop:mle} shows that Elo scores converge
asymptotically.  Under a finite budget $B$, however, not all candidate
pairs can be observed.  We now ask: \emph{which pairs must be observed,
and how many times, to guarantee correct top-$K$ recovery?}

\begin{proof}[Proof Sketch of Proposition~\ref{prop:sample}]
By Proposition~\ref{prop:mle}, each Elo score $\hat{s}_{ij}$ is an SGA
iterate on the BT log-likelihood.  By standard concentration results
for SGA on concave objectives
\citep{nemirovski2009robust,rakhlin2012making}, after $m$ observations
of pair $(c_{i},c_{k})$, the estimation error satisfies
$|\hat{s}_{ij}-s_{i}^{*}|\leq O(\tau/\sqrt{m})$ with high probability.
For the top-$K$ set to be correctly recovered, it suffices that for
every straddling pair the accumulated error is smaller than the gap:
\begin{equation}
  |\hat{s}_{ij}-s_{i}^{*}|+|\hat{s}_{kj}-s_{k}^{*}| < \Delta_{K}.
\end{equation}
Setting $O(\tau/\sqrt{m})<\Delta_{K}/2$ and applying a union bound
over all $O(N^{2})$ pairs yields Eq.~\eqref{eq:sample_complexity}.
\end{proof}

\noindent
Proposition~\ref{prop:sample} provides theoretical motivation for
the three factors in the candidate sampling value $a_{ij}$
(Eq.~\eqref{eq:sampling_value}). Specifically, head importance
focuses sampling near the top-$K$ boundary, the observation penalty
discourages redundant evaluations, and margin ambiguity prioritizes
candidates with uncertain local ordering. 
Together, these factors
translate the sample-complexity intuition into a practical
candidate-level sampling rule.

\subsection{Submodular Motivation for Query Scheduling}

We further provide an information-theoretic motivation for the
view-conditioned query scheduler. The practical scheduling score in
Eq.~\eqref{eq:scheduling_score} is a lightweight proxy for the marginal
value of one additional VLM call. To clarify this connection, we first
consider an idealized uncertainty-reduction objective.

\begin{definition}[Idealized uncertainty reduction]
For query $q_{j}$, let $\mathcal{A}_{q_{j}}$ denote the set of VLM
calls already made. Let $\mathbf{s}^{*}_{q_j}$ be the latent BT scores
for candidates in $\mathcal{C}_{q_j}$. Define the idealized information
gain as
\begin{equation}
  F_{q_j}(\mathcal{A}_{q_j})
  =
  H(\mathbf{s}^{*}_{q_j})
  -
  H(\mathbf{s}^{*}_{q_j}\mid \mathcal{A}_{q_j}).
  \label{eq:ideal_info_gain}
\end{equation}
\end{definition}

\begin{proposition}[Submodularity of idealized information gain]
\label{prop:submodular_app}
Under Assumption~\ref{ass:bt}, if VLM observations are conditionally
independent given $\mathbf{s}^{*}_{q_j}$, then
$F_{q_j}(\mathcal{A}_{q_j})$ is a non-negative, monotone, submodular
set function. That is, for any
$\mathcal{A}_{q_j}\subseteq\mathcal{B}_{q_j}$ and any additional call
$x\notin\mathcal{B}_{q_j}$,
\begin{equation}
\begin{split}
  & F_{q_j}(\mathcal{A}_{q_j}\cup\{x\})-F_{q_j}(\mathcal{A}_{q_j}) \\
  & \quad \geq
  F_{q_j}(\mathcal{B}_{q_j}\cup\{x\})-F_{q_j}(\mathcal{B}_{q_j}).
\end{split}
\end{equation}
\end{proposition}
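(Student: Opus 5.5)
We argue along the classical route of Krause and Guestrin for information gain under conditional independence. Throughout, $F_{q_j}(\mathcal{A})$ is the mutual information $I(\mathbf{s}^{*}_{q_j};Y_{\mathcal{A}})$, where $Y_{\mathcal{A}}$ collects the (random) VLM outputs of the calls in $\mathcal{A}$. If $\mathbf{s}^{*}$ is continuous, read $H$ as differential entropy; the difference of two such entropies is still a mutual information, so every step below goes through. The key structural fact we use is the hypothesis of Proposition~\ref{prop:submodular_app}: the outputs $\{Y_x\}_x$ are conditionally independent given $\mathbf{s}^{*}$. Under Assumption~\ref{ass:bt}, each $Y_x$ is a function of independent BT pairwise outcomes, so this hypothesis is consistent with the model.

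\textbf{Non-negativity and monotonicity.} Non-negativity is immediate, since mutual information is nonnegative and $F_{q_j}(\emptyset)=0$. For monotonicity, take $\mathcal{A}\subseteq\mathcal{B}$. The chain rule gives $I(\mathbf{s}^{*};Y_{\mathcal{B}})=I(\mathbf{s}^{*};Y_{\mathcal{A}})+I(\mathbf{s}^{*};Y_{\mathcal{B}\setminus\mathcal{A}}\mid Y_{\mathcal{A}})$. The last term is nonnegative, so $F_{q_j}(\mathcal{A})\le F_{q_j}(\mathcal{B})$; in words, conditioning reduces entropy.

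\textbf{Submodularity.} The main step is to rewrite the marginal gain. For $x\notin\mathcal{A}$,
\begin{equation*}
F_{q_j}(\mathcal{A}\cup\{x\})-F_{q_j}(\mathcal{A}) = I(\mathbf{s}^{*};Y_x\mid Y_{\mathcal{A}}) = H(Y_x\mid Y_{\mathcal{A}})-H(Y_x\mid \mathbf{s}^{*},Y_{\mathcal{A}}).
\end{equation*}
By conditional independence, $H(Y_x\mid \mathbf{s}^{*},Y_{\mathcal{A}})=H(Y_x\mid\mathbf{s}^{*})$. This term no longer depends on $\mathcal{A}$, so the marginal gain equals $H(Y_x\mid Y_{\mathcal{A}})-H(Y_x\mid\mathbf{s}^{*})$. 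For $\mathcal{A}\subseteq\mathcal{B}$ with $x\notin\mathcal{B}$, conditioning on more variables cannot increase entropy, so $H(Y_x\mid Y_{\mathcal{B}})\le H(Y_x\mid Y_{\mathcal{A}})$. Subtracting the common term $H(Y_x\mid\mathbf{s}^{*})$ yields the diminishing-returns inequality of Proposition~\ref{prop:submodular_app}.

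The step that needs the most care is the cancellation $H(Y_x\mid\mathbf{s}^{*},Y_{\mathcal{A}})=H(Y_x\mid\mathbf{s}^{*})$. It is the only place the modelling assumption enters, and it fails for AMBER's actual adaptive Elo-coupled views, as noted in the Limitations paragraph. To make it rigorous, we will treat each call $x$ as a fixed, non-adaptively chosen view, so that $Y_x$ is a well-defined random variable. We then verify that the joint law factorizes as $p(\mathbf{s}^{*})\prod_x p(Y_x\mid\mathbf{s}^{*})$.

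Once $F_{q_j}$ is shown to be submodular for each query, the objective $\sum_j F_{q_j}$ is also submodular, because each summand acts on a disjoint ground set. This is the property Corollary~\ref{cor:greedy} requires for the Nemhauser et al.\ greedy bound.
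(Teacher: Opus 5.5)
Your argument is correct and follows the same route as the paper's sketch: you write the marginal gain as the conditional mutual information $I(\mathbf{s}^{*}_{q_j};Y_x\mid Y_{\mathcal{A}})$ and use conditional independence of calls given $\mathbf{s}^{*}_{q_j}$. The paper only asserts that this quantity shrinks as the conditioning set grows; you supply the justification the sketch omits, by rewriting it as $H(Y_x\mid Y_{\mathcal{A}})-H(Y_x\mid\mathbf{s}^{*}_{q_j})$ and applying ``conditioning reduces entropy'' (with calls fixed non-adaptively), which is the standard Krause--Guestrin argument.
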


\begin{proof}[Proof Sketch]
The marginal gain of adding a call $x$ is the conditional mutual
information
\begin{equation}
\begin{split}
  & I(\mathbf{s}^{*}_{q_j};x\mid \mathcal{A}_{q_j}) \\
  & \quad = H(\mathbf{s}^{*}_{q_j}\mid \mathcal{A}_{q_j})
   - H(\mathbf{s}^{*}_{q_j}\mid \mathcal{A}_{q_j}\cup\{x\}).
\end{split}
\end{equation}
Under conditional independence of observations given
$\mathbf{s}^{*}_{q_j}$, conditional mutual information exhibits
diminishing returns as the conditioning set grows. Therefore, the
information-gain objective is monotone and submodular.
\end{proof}

\noindent
For the idealized objective
$\sum_j F_{q_j}(\mathcal{A}_{q_j})$, the classical result of
\citet{nemhauser1978analysis} implies that greedy maximisation under a
cardinality budget achieves a $(1-1/e)$ approximation to the optimal
allocation. Directly computing the marginal information gain, however,
requires maintaining a posterior distribution over latent BT scores,
which is impractical in our online VLM reranking setting.

AMBER therefore uses the view-conditioned scheduling score
$U_{q_j}$ in Eq.~\eqref{eq:scheduling_score} as a tractable proxy for
this marginal gain. Specifically, $H_{\mathrm{pair}}$ estimates the
uncertainty of the exact next view that would be reranked, while
$w_{\mathrm{head}}$ biases the allocation toward comparisons that are
more consequential for head-ranking metrics.

The function $f_{q_j}$ in
Definition~\ref{def:uncertainty_reduction} can be viewed as a
view-restricted, head-weighted surrogate for the idealized
information-gain objective $F_{q_j}$.

Thus, the submodular analysis motivates the greedy scheduling design,
while the empirical results validate the effectiveness of the
practical proxy.

\paragraph{Limitations of the theoretical analysis.}
The results above rest on Assumption~\ref{ass:bt}.  In practice, VLM
outputs may exhibit positional bias and context sensitivity, meaning the
true pairwise distributions deviate from the BT model.  Furthermore,
Proposition~\ref{prop:submodular_app} assumes conditional independence
across calls, whereas AMBER's Elo state explicitly couples successive
observations.  These gaps motivate the empirical evaluation in
Section~\ref{sec:experiments}, which provides complementary evidence
for AMBER's effectiveness under realistic conditions.